\preto\tabular{\setcounter{magicrownumbers}{0}}
\newcounter{magicrownumbers}
\definecolor{gray}{rgb}{0.2, 0.5, 0.478}
\newtheorem{definition}{Definition}
\newtheorem{proposition}{Proposition}
\newtheorem{lem}{Lemma}
\DeclareMathOperator*{\argmin}{argmin}
\DeclareMathOperator*{\argmax}{argmax}
\begin{document} 

\title{Point-Set Kernel Clustering}

\author{Kai Ming Ting, 
Jonathan R. Wells,
and Ye Zhu 
\thanks{K. M. Ting is affiliated with the National Key Laboratory for Novel Software Technology, Nanjing University, China.
E-mail: {tingkm@nju.edu.cn}}%
\thanks{J. R. Wells and Y. Zhu are affiliated with the School of Information Technology, Deakin University, Geelong, Australia.  
E-mail: {jonathan.wells.research@gmail.com, ye.zhu@ieee.org.}}}

\IEEEtitleabstractindextext{%
\begin{abstract}
Measuring similarity between two objects is the core operation in existing clustering algorithms in grouping similar objects into clusters. This paper introduces a new similarity measure called point-set kernel which computes the similarity between an object and a set of objects. The proposed clustering procedure utilizes this new measure to characterize every cluster grown from a seed object. We show that the new clustering procedure is both effective and efficient that enables it to deal with large scale datasets. In contrast, existing clustering algorithms are either efficient or effective.  In comparison with the state-of-the-art density-peak clustering and scalable kernel k-means clustering, we show that the proposed algorithm is more effective and runs orders of magnitude faster  when applying to datasets of millions of  data points, on a commonly used computing machine.

\end{abstract}

\begin{IEEEkeywords}
Cluster analysis, kernel clustering, point-set kernel, data dependent kernel
\end{IEEEkeywords}
}

\maketitle

\IEEEraisesectionheading{\section{Introduction}\label{sec:introduction}}

\IEEEPARstart{S}{imilarity} between two objects is used as the basis in grouping objects into clusters in existing clustering algorithms. State-of-the-art clustering algorithms are density based \cite{DensityPeak-2014}, and they rely on distance measure (aka a similarity measure) between two objects to compute the density of each data point, representing each object. An early influential density based algorithm DBSCAN \cite{DBSCAN-1996} separates core points from noise points by using a density threshold, where the former has high density and the latter has low density. Then only core points in the neighbourhood of each other are grouped into the same cluster. The key advantage of DBSCAN is that it can detect clusters of arbitrary shapes and sizes. The exact condition under which DBSCAN fails to identify all clusters has been determined recently \cite{YeZhu2016}. In general terms, DBSCAN fails to identify all clusters when the clusters have hugely varying densities.

Density-peak clustering (DP) \cite{DensityPeak-2014}, which is a more recent density-based algorithm, begins by identifying density peaks that are far from each other; and then links data points that are transitively connected to a peak to form a cluster. Because it does not employ a density threshold, DP has avoided the weakness of DBSCAN mentioned above. Though DP is a stronger clustering algorithm than DBSCAN in general \cite{IsolationKernel-AAAI2019}, DP has its own weaknesses. A key weakness is the requirement to find all density peaks in the first step, after the density of each point has been estimated based on a distance/similarity measure. As a result, it has difficulty identifying clusters, where each cluster has uniform density distribution. 

A recent research has shown that a kernelized DBSCAN called MBSCAN \cite{IsolationKernel-AAAI2019}, which employs Isolation Kernel \cite{ting2018IsolationKernel}, overcomes the weakness of DBSCAN and uplifts its clustering performance to the same level as DP. However, it has high computational cost because it employs the same algorithm as DBSCAN and only replaces the distance measure with Isolation Kernel. Like DP, DBSCAN and MBSCAN are unable to deal with large scale datasets because they have quadratic time complexity.

In a nutshell, high computational cost is a longstanding fundamental issue of existing density-based clustering algorithms. We contend that the root cause is due to the use of a similarity between two data points, resulting the computational cost to be at least proportional to the square of the data size, i.e.,
$n^2$, where $n$ is the number of data points in a given dataset. 
This has restricted the existing density-based clustering algorithms to small datasets only, as evidenced on the datasets used in their evaluations \cite{DBSCAN-1996, DensityPeak-2014, YeZhu2016, IsolationKernel-AAAI2019}. 

In the age of big data, these density-based algorithms could not be used, despite their superior clustering capability in comparison with more traditional clustering algorithms such as k-means \cite{k-means-macqueen1967}. Although there are attempts to parallelize these algorithms or approximate the clustering outcomes through sampling, the fundamental limitation remains, i.e., their computational cost being at least proportional to $n^2$. In other words, these attempts are a mitigating approach that enables some large datasets to be executed in reasonable time. But huge datasets remain out of bound for these algorithms running on a machine with a fixed number of CPUs (see Section \ref{sec_scaleup_test} for a scaleup test example.)

\begin{mdframed}[backgroundcolor=black!10]
\noindent
\textbf{Significance}\\
\noindent
This paper introduces the first kernel-based clustering which has runtime proportional to data size and yields clustering outcomes that are superior to those of existing clustering algorithms. The success of the new clustering is due to the use of a proposed point-set kernel which has an exact finite dimensional feature map. This enables the new clustering to (i) characterize clusters of arbitrary shapes, varied densities and sizes in a dataset; and (ii) run orders of magnitude faster than existing state-of-the-art clustering algorithms which have quadratic time cost. It is the only clustering algorithm which can process millions of data points on a commonly used machine, as far as we know.
\end{mdframed}

The rest of the paper is organized as follows. The proposed point-set kernel and its properties are described in the next three sections. Section \ref{sec_psKC} presents the proposed clustering algorithm, followed by a conceptual comparison in the next section. Section~\ref{sec_experiment} provides the empirical evaluation results. A discussion of related issues and conclusions are provided in the last two sections.

\section{Proposed point-set kernel}
Rather than relying on a similarity between two data points, we propose a new similarity which measures how similar a data point $x \in \mathbb{R}^d$ is to a set of data points $G$, as a point-set kernel:
\begin{equation}
 \widehat{K}(x, G) 
 =  \left< {\Phi}(x), \widehat{\Phi}(G) \right>
 \label{eqn_dot-prod-mean-map0}
\end{equation}
and
\begin{equation}
\widehat{\Phi}(G)= \frac{1}{|G|}\sum\limits_{y \in G} {\Phi}(y)
 \label{eqn_kernel-mean-map}
\end{equation}
\noindent
where $\widehat{\Phi}$ is the {\em kernel mean map}\footnote{Kernel mean embedding \cite{HilbertSpaceEmbedding2007,KernelMeanEmbedding2017} is an approach to convert a point-to-point kernel into a distribution kernel which measures similarity between two distributions. The proposed point-set kernel can be viewed as a special case of kernel mean embedding. Kernel mean embedding uses the same kernel mean map we have stated here.} of $\widehat{K}$; $\Phi$ is the feature map of a point-to-point kernel $\kappa$; and $\left< a, b \right>$ denotes a dot product between two vectors $a$ and $b$.

In contrast, the point-to-point kernel (a similarity between two data points), expressed as a dot product, is given as follows \cite{SVMBook}:
\begin{equation}
 \kappa(x, y) 
 =  \left< {\Phi}(x), {\Phi}(y) \right>
 \label{eqn_dot-prod-mean-map1}
\end{equation}

Notice that the summation in $\widehat{\Phi}(G)$ (in Equation \ref{eqn_kernel-mean-map}) is to be done once only as a preprocessing. Then computing $\widehat{K}(x, G)$ in equation \ref{eqn_dot-prod-mean-map0}, based on the dot product, takes a fixed amount of time only, independent of $n$ (the data size of $G$.) 

Therefore, to compute the similarity of $x$ with respect to $G$ for all points $x$ in $G$, i,e., $\widehat{K}(x, G)\ \forall x \in G$, has a computational cost which is proportional to $n$ only. 

Also note that the use of the feature map $\Phi$ is necessary in order to achieve the stated efficiency. The alternative, which employs the point-to-point kernel/distance directly in the computation, will have a computational cost that is proportional to $n^2$---the root cause of high computational cost in existing density-based algorithms.

The point-set kernel formulation assumes that the point-to-point kernel $\kappa$ has a finite-dimensional feature map $\Phi$.

Commonly used point-to-point kernels (such as Gaussian and Laplacian kernels) have two key limitations \cite{ting2018IsolationKernel,IsolationKernel-AAAI2019}: they have a feature map of intractable dimensionality \cite{Nystrom_NIPS2000}; and their similarity is independent of a given dataset. The first limitation prevents these kernels to be used in the proposed formulation directly. 

\vspace{3mm}
\noindent
\textbf{$\widehat{K}$ built from Isolation Kernel}

We propose to use a recently introduced point-to-point kernel which has an exact finite dimensional feature map called Isolation Kernel \cite{ting2018IsolationKernel, IsolationKernel-AAAI2019} as $\kappa$ in $\widehat{K}$. It has two characteristics, which are antitheses to the two limitations mentioned above, i.e., it has a finite-dimensional feature map; and its similarity adapts to local density of the data distribution of a given dataset. 

The first characteristic enables Isolation Kernel to be used directly in the proposed point-set kernel.
The exact finite-dimensional feature map is crucial in achieving the only kernel-based clustering which has runtime proportional to data size, we will propose in Section \ref{sec_psKC}.

The second characteristic has a specific data dependent property: \emph{two points in a sparse region are more similar than two points of equal inter-point distance in a dense region} \cite{ting2018IsolationKernel, IsolationKernel-AAAI2019}. This characteristic is crucial for the proposed clustering algorithm to obtain good clustering outcomes.

As the point-set kernel is constructed from a dataset $D$, Equations \ref{eqn_dot-prod-mean-map0} and \ref{eqn_kernel-mean-map} are more precisely expressed as:
\[
\widehat{K}(x,G|D) = \left< {\Phi}(x|D), \widehat{\Phi}(G|D) \right>
\]
and
\[
\widehat{\Phi}(G|D)= \frac{1}{|G|}\sum\limits_{y \in G} {\Phi}(y|D)
\]
\noindent
where $G \subseteq D$; and $\Phi$ is the feature map of Isolation Kernel which is constructed from $D$ (and Isolation Kernel has no functional form.)

The symbol `$|D$' is dropped from the expressions in the rest of the paper for brevity.

The details of Isolation Kernel and its feature map can be found in Appendix \ref{App_IK}.

Once $\widehat{K}$ is derived from a dataset $D$, it is fixed and ready to be used in a kernel-based algorithm.

\vspace{3mm}
\noindent
\textbf{$\widehat{K}$ similarity distribution}

The point-set kernel can be used to describe the data distribution of a dataset in terms of similarity distribution, independent of the clustering process. To do this, some sets in the dataset must be given, either as the ground truth or the clustering outcome of an algorithm. 

\begin{definition}
Given sets $G^j, j=1,\dots,k$ in a dataset $D$ and the $\widehat{K}$ derived from $D$, the  $\widehat{K}$ similarity distribution  is defined as: 
\[
\max_j \widehat{K}(x, G^j), \forall x \in \mathbb{R}^d.
\]

\label{def:similarity_def}
\end{definition}

The distribution is analogous to the density distribution estimated by, e.g.,  a kernel density estimator (KDE); except that sets in a dataset must be provided. In other words, the  $\widehat{K}$ similarity distribution describes the data distribution in terms of the given sets in the dataset.

An example comparison of density distribution and $\widehat{K}$ similarity distribution of a same dataset is given in Figure \ref{fig:example_varied_densities}. 

\begin{figure}
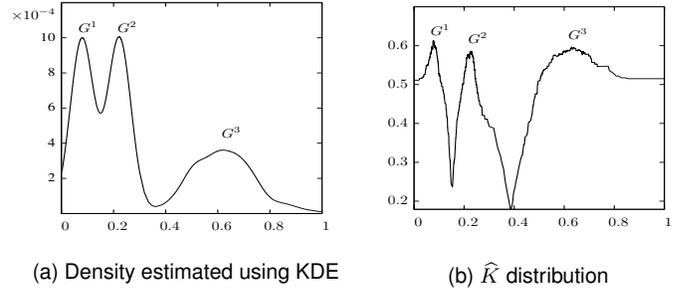

\vspace{-6mm}
 \centering
 \subfloat[Density estimated  using KDE
  ]{\includecombinedgraphics[height=.20\textwidth,width=.25\textwidth,vecfile=hard_3_cls_KDE-eps-converted-to]{hard_3_cls_KDE}} 
 \subfloat[$\widehat{K}$ distribution
 ]{\includecombinedgraphics[height=.20\textwidth,width=.25\textwidth,vecfile=hard_3_cls_1D_sim-eps-converted-to]{hard_3_cls_1D_sim}} \\
 \caption{Density distribution versus $\widehat{K}$ similarity distribution on a one-dimensional dataset having two dense clusters and one sparse cluster.}
 \label{fig:example_varied_densities}
 \vspace{-3mm}
\end{figure}

\section{Properties of point-set kernel for points outside a cluster}
\label{sec_properties}
To use the point-set kernel to grow a cluster, we need to understand the properties of the kernel for points outside the cluster.

Given a dataset $D$ and  an (expanding) cluster $C \subset D$.

Let $x \in D \setminus C$ and $x'\in D \setminus C'$; the distance between $x$  and  a set $C$ be $\ell(x,C) \equiv \ell(x,\Bar{x}_C)$, where $\Bar{x}_C$ is the preimage of $\widehat{\Phi}(C|D)$; and $\rho(C) > \rho(C')$, where $\rho(C)$ denotes the average density of $C$.

\begin{proposition}
The point-set kernel $\widehat{K}(x,C|D)$ derived from $D$ has the following properties:

 \begin{itemize}
    \item[(a)]Fall-off-the-cliff property:  $\widehat{K}(x,C)$ decreases sharply as $\ell(x,C)$ increases.

  \item[(b)] Data dependent property: $\frac{d \widehat{K}(x,C)}{dx} > \frac{d \widehat{K}(x',C')}{dx'}$, if $\ell(x,C) = \ell(x',C')$  and $\rho(C) > \rho(C')$.
\end{itemize}
\end{proposition}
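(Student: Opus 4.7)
The plan is to unfold the point-set kernel into an average of point-to-point Isolation Kernel evaluations, and then transfer the two already-established characteristics of Isolation Kernel---its sharp fall-off outside a shared isolation partition, and its data-dependent partition size---to that average. Specifically, I would first write
\[
\widehat{K}(x,C) = \bigl\langle \Phi(x),\widehat{\Phi}(C)\bigr\rangle = \frac{1}{|C|}\sum_{y \in C} \kappa(x,y),
\]
which reduces any statement about $\widehat{K}$ to a statement about the average of $\kappa(x,y)$ over $y\in C$.

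For part (a), I would invoke the fact (summarised in Appendix~\ref{App_IK}) that $\kappa(x,y)$ is non-trivial essentially only when $x$ and $y$ fall in the same isolation partition, and that the probability of this shared-partition event decays sharply with the inter-point distance. Since $\ell(x,C)=\ell(x,\bar x_C)$ is the distance from $x$ to the preimage of the mean map $\widehat{\Phi}(C)$, it acts as a proxy for the typical distance from $x$ to members of $C$. As $\ell(x,C)$ grows, progressively fewer $y\in C$ share a partition with $x$, and most summands drop to zero; the averaged $\widehat{K}(x,C)$ therefore also drops steeply, yielding the fall-off-the-cliff property.

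For part (b), the essential input is the data-dependent property of Isolation Kernel: the partitions generated from $D$ are narrow in dense regions of $D$ and broad in sparse regions. Consequently, a small perturbation of $x$ in the dense region surrounding $C$ is much more likely to push $x$ across a partition boundary with some $y\in C$---removing that summand from the kernel sum---than an equal perturbation of $x'$ around the sparser $C'$. Averaging this per-pair observation over $C$ and $C'$ at the common distance $\ell(x,C)=\ell(x',C')$ gives the claimed comparison between $d\widehat{K}(x,C)/dx$ and $d\widehat{K}(x',C')/dx'$.

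The main obstacle is that Isolation Kernel has no closed functional form and $\kappa(x,\cdot)$ is piecewise constant in $x$, so the derivative in part (b) is not a classical one. I would handle this by interpreting $d\widehat{K}/dx$ as the expected finite-difference rate under the random construction of the isolation partitions from $D$ (equivalently, as the slope of the locally smooth interpolation provided by averaging $\Phi(y)$ over $y\in C$ in $\widehat{\Phi}(C)$). Once that interpretation is fixed, both (a) and (b) collapse to monotonicity statements about the shared-partition probability---sharp decay with distance for (a), and sharper decay in denser regions for (b)---each of which is an immediate property of how the partitions are built from $D$.
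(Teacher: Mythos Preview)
Your argument is essentially correct and covers both parts, but it proceeds by a different reduction than the paper. You expand
\[
\widehat{K}(x,C)=\frac{1}{|C|}\sum_{y\in C}\kappa(x,y)
\]
and then average the per-pair Isolation Kernel behaviour over $y\in C$. The paper instead collapses the whole set to a single point via the preimage: it uses $\ell(x,C)\equiv\ell(x,\bar x_C)$ and the identification $\widehat{K}(x,C)\equiv\kappa(x,\bar x_C)$, so that part~(b) becomes a direct application of Lemma~\ref{lem_IK} to the two point pairs $(x,\bar x_C)$ and $(x',\bar x_{C'})$, together with the translation $\rho(C)>\rho(C')\equiv\rho(\bar x_C)>\rho(\bar x_{C'})$. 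Your route avoids assuming that a well-behaved preimage exists, and it makes the mechanism (fewer shared partitions as $x$ moves away; narrower partitions in dense regions) more explicit; the paper's route is shorter and ties the inequality in~(b) to a single invocation of Lemma~\ref{lem_IK}, at the cost of the preimage identification. One point to tighten in your version: when you average the per-pair rate comparison, you implicitly need that the members of $C$ lie in a denser region than those of $C'$, not merely that $\rho(C)>\rho(C')$ on average---the paper sidesteps this by pushing the density comparison onto the single preimage point, so if you keep your decomposition you should state that regional-density assumption explicitly.
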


In other words, the rate of falling-off at $x$ is data dependent: it is proportional to the average density of $C$. This property follows directly from the data dependent property of Isolation Kernel. See the proof in Appendix \ref{App_Property_psKC}.

These properties can also be understood from the implementation of Isolation Kernel. For each isolation partitioning, partitions at the boundary of $C$ cover a limited area outside $C$. This contributes to the fall-off-the-cliff property. Also, the dense part of $C$ has small partitions and the sparse part of $C$ has large partitions. This creates different rates of falling-off mentioned above.

An example of the fall-off-the-cliff property is shown in Figure~\ref{fig:Fall-off-the-cliff-property}. It shows that the rate of falling-off is higher wrt the dense cluster than that wrt the sparse cluster.

An example of $\widehat{K}$ distribution, as stated in Definition \ref{def:similarity_def}, is shown in Figure \ref{fig:example_varied_densities}, in comparison with the density distribution as estimated by a kernel density estimator using Gaussian kernel. Notice that the `valley' between the two dense clusters is significantly sharper than that between the sparse cluster and the dense cluster. This is a direct result of the two properties mentioned above.

\begin{figure}
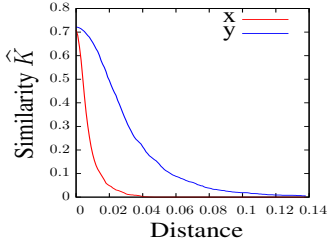

\vspace{-3mm}
 \centering
{\includecombinedgraphics[height=.20\textwidth,width=.25\textwidth,vecfile=hard_3_cls_fig_1_50-eps-converted-to]{hard_3_cls_fig_1_50}}
\vspace{-4mm}
 \caption{Fall-off-the-cliff  property of $\widehat{K}$ : $x$ is close to the dense cluster $C^2$ and $y$ is close to the sparse cluster $C^3$. 
 The x-axis: distances $\ell(x,C^2)$ and $\ell(y,C^3)$ correspond to $x$ and $y$ curves, respectively. $C^2$ and  $C^3$ are subsets of $G^2$ and  $G^3$ (target clusters), respectively, shown in Figure~\ref{fig:example_varied_densities}(a). Each $C$ has the 50 highest density points in  $G$.}
 \label{fig:Fall-off-the-cliff-property}
\end{figure}

The point-set kernel that employs Isolation Kernel has two important influences. First, with an appropriate clustering algorithm design, the above-mentioned properties enable 
\begin{itemize}
  \item Each cluster to be expanded radially in all directions from a seed in multiple iterations, where each iteration recruits a subset of new members in the immediate neighborhood of the expanding cluster. 
  \item Arbitrary-shaped clusters of different  densities and sizes to be discovered via the above cluster expansion.
\end{itemize}

Second, in terms of runtime efficiency, the use of Isolation Kernel in the point-set kernel enables similarity between a point and a set to be computed efficiently, replacing point-to-point similarity/distance as the core operation. The latter is the root cause of high time complexity in existing algorithms.

The proposed point-set kernel clustering algorithm is designed based on these two properties. It is described in the next section.

\section{Clustering based on point-set kernel}
\label{sec_psKC}
The proposed clustering, called \textbf{point-set kernel clustering} or  \texttt{psKC}, employs the point-set kernel $\widehat{K}$ to characterize clusters. It identifies all members of each cluster by first locating the seed of the dataset. Then, it expands its members in the cluster's local neighbourhood 
which grows at a set rate ($\varrho$) 
incrementally; and it stops growing when all unassigned points have similarity wrt the cluster falling below a threshold ($\tau$). 
The process repeats for the next cluster using the remaining points in the given dataset $D$, yet to be assigned to any clusters found so far, until $D$ is empty or no point can be found which has similarity more than $\tau$. All remaining points after the clustering process are noise as they are less than the set threshold for each of the clusters discovered.

The  \texttt{psKC} procedure is shown in Algorithm \ref{alg:pset-KC}.

\begin{algorithm}[t]
\SetAlgoLined
\SetKwInOut{Input}{Input}\SetKwInOut{Output}{Output}
 \Input{$D$: dataset,  $\tau$: similarity threshold, $\varrho$: growth rate}
\Output{$G^j, j=1,\dots,k$: $k$ clusters, $N$: noise set}
\BlankLine
$k=0$\;
\While{$|D|>1$}{
$x_p = \argmax_{x \in D} \widehat{K}(x,D)$ \tcp*{Seed}
$x_q = \argmax_{x \in D \setminus \{x_p\}} \widehat{K}(x, \{x_p\})$\;
$\gamma = (1-\varrho) \times \widehat{K}(x_q, \{x_p\})$\;
\If{$\gamma \le \tau$}{Terminate while-do loop\;}
$k$++\;
$G_0^k = \{x_p, x_q \}$ \tcp*{Initial cluster $k$}
\For{($i=1;\ \gamma  > \tau;\ i$++)}{
  $G_i^k = \{ x \in D\ |\ \widehat{K}(x, G_{i-1}^k) > \gamma \}$\; 
  $\gamma = (1-\varrho) \gamma$\; 
  }
  $G^k=G_{i-1}^k$ \tcp*{Cluster $k$ grown}
  $D= D \setminus G^k$\;
}
$N = D$\;
\Return $G^j, j=1,\dots,k;\ N$\;
 \caption{point-set Kernel Clustering \texttt{psKC}}
 \label{alg:pset-KC}
\end{algorithm}

Here we formally define  the cluster which is grown from a seed, according to \texttt{psKC}.

\begin{definition}
A $\widehat{K}^\tau_\varrho$-expanded cluster grows from a seed $x_p$ selected from $D$, using $\widehat{K}(\cdot, \cdot)$ with similarity  threshold $\tau < 1$ and growth rate $\varrho \in (0,1)$, is defined recursively as: 
\[
G_i = \{ x \in D\ |\ \widehat{K}(x, G_{i-1}) > \gamma_{i}  >\tau \}
\]
where $x_q = \argmax_{x \in D \setminus \{x_p\}} \widehat{K}(x, \{x_p\})$; $G_0 = \{ x_p, x_q\}$; $\gamma_i = (1-\varrho) \gamma_{i-1}$; and $\gamma_0 = (1-\varrho)\widehat{K}(x_q, \{x_p\}) $. 

\label{def:ExpandedCluster}
\end{definition}

The number of iterations required to expand each cluster up to the threshold $\tau$ is shown in lines 11-14 in Algorithm~\ref{alg:pset-KC} which are the key part of the algorithm. The rest of the algorithm is for initialization and house keeping only.

Let $G^j$ be $\widehat{K}^\tau_\varrho$-expanded cluster $j$ from $D$. 

The number of $\widehat{K}^\tau_\varrho$-expanded clusters in $D$ is discovered automatically by repeating the above cluster growing process on $G^k$ from $D \setminus \{G^j, j=1,\dots,k-1 \}$.

\begin{figure*}[h]
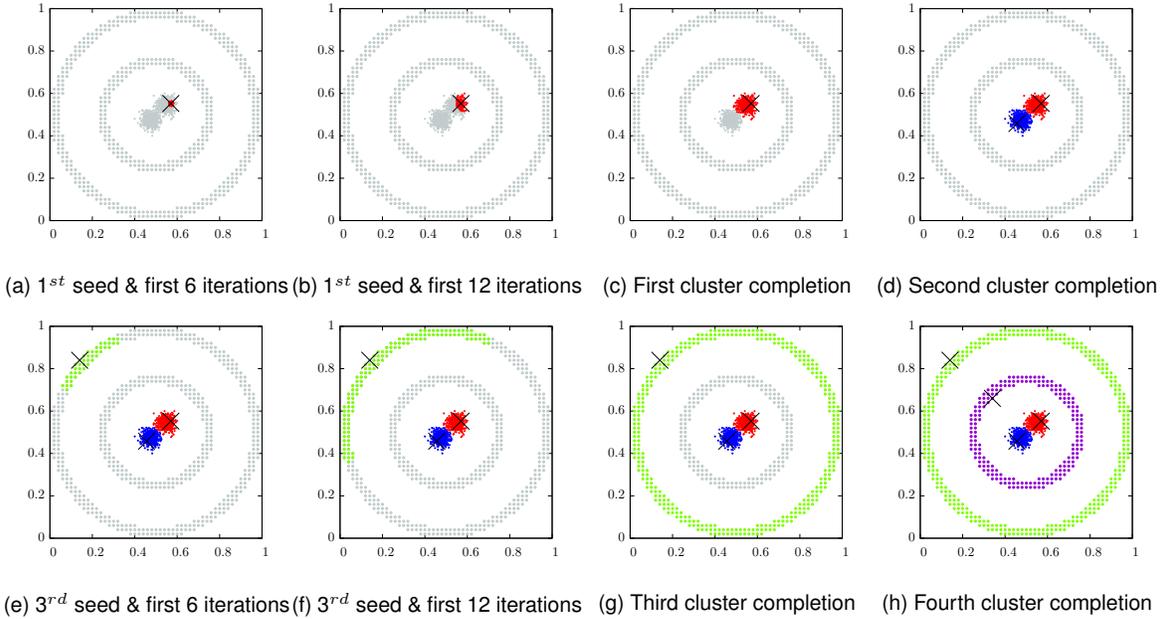

\vspace{-5mm}
 \centering
 \subfloat[1$^{st}$ seed \& first 6 iterations]{\includecombinedgraphics[scale=0.5,vecfile=2U_ring_2G_1st_cls-eps-converted-to]{2U_ring_2G_1st_cls}}
 \subfloat[1$^{st}$ seed \& first 12 iterations]{\includecombinedgraphics[scale=0.5,vecfile=2U_ring_2G_1st_cls_2nd-eps-converted-to]{2U_ring_2G_1st_cls_2nd}}
 \subfloat[First cluster completion ]{\includecombinedgraphics[scale=0.5,vecfile=2U_ring_2G_1st_cls_all-eps-converted-to]{2U_ring_2G_1st_cls_all}}
 \subfloat[Second cluster completion ]{\includecombinedgraphics[scale=0.5,vecfile=2U_ring_2G_2nd_cls_all-eps-converted-to]{2U_ring_2G_2nd_cls_all}} \\
 \vspace{-4mm}
 \subfloat[3$^{rd}$ seed \& first 6 iterations]{\includecombinedgraphics[scale=0.5,vecfile=2U_ring_2G_3rd_cls-eps-converted-to]{2U_ring_2G_3rd_cls}}
 \subfloat[3$^{rd}$ seed \& first 12 iterations]{\includecombinedgraphics[scale=0.5,vecfile=2U_ring_2G_3rd_cls_2nd-eps-converted-to]{2U_ring_2G_3rd_cls_2nd}}
 \subfloat[Third cluster completion]{\includecombinedgraphics[scale=0.5,vecfile=2U_ring_2G_3rd_cls_all-eps-converted-to]{2U_ring_2G_3rd_cls_all}}
 \subfloat[Fourth cluster completion]{\includecombinedgraphics[scale=0.5,vecfile=2U_ring_2G_4th_cls-eps-converted-to]{2U_ring_2G_4th_cls}} \\

  \caption{\texttt{psKC} clustering outcome of the Ring-G dataset. Crosses indicate the seed points. Gray points are data points in $D$ which are yet to be clustered. The number of iterations $f$ refers to that in lines 11-14 in Algorithm \ref{alg:pset-KC}. $f$ used to recruit all members of  each cluster is: 20, 19, 20 \& 20.}
  \label{fig:ring-G}
\end{figure*}

\begin{definition} After discovering all $\widehat{K}^\tau_\varrho$-expanded clusters $G^{j}$ in $D$,
noise is defined as
\[
N = \{ x \in D\ |\  \forall_j\ \widehat{K}(x, G^{j}) \le  \tau \}.
\]
\end{definition}

A post-processing is applied to all clusters produced by \texttt{psKC} to ensure that the following objective is achieved:
\begin{equation}
\Gamma(D) = \max_{G^1,\dots,G^k}  \sum^k_{j=1} \sum_{x \in G^j} \left< \Phi(x), \widehat{\Phi}(G^j) \right>.
\label{eqn_maximzing-dot-product}
\end{equation}

This post-processing reexamines all points which have the lowest similarity wrt cluster $G^j$ if they could be reassigned to other cluster to maximize the total similarity. 

A demonstration of the clustering process of \texttt{psKC} is shown in Figure \ref{fig:ring-G}, based on a two-dimensional dataset which has data points distributed in two concentrical rings and two Gaussian clusters. Figure~\ref{fig:ring-G}(a) shows the progression of identifying the first seed and growing the first cluster to include all members found in the first 6 iterations; followed by including all members found in the first 12 iterations in Figure~\ref{fig:ring-G}(b); and all members in the first cluster are found in Figure~\ref{fig:ring-G}(c)---this is when $\widehat{K}(x, G^1) < \tau$ for all $x$ in $D$ excluding $G^1$. The same process in identifying the third cluster, i.e., the outer ring, is shown in the first three subfigures in the second row of Figure~\ref{fig:ring-G}.

The time complexity of \texttt{psKC} is given in Table \ref{tab:time_complexity}. Line 12 in the for-loop iteration  in Algorithm \ref{alg:pset-KC} is the most expensive part. $\widehat{K}$ in line 12 needs to be evaluated $n-2$ times in the very first iteration; and any subsequent iterations take monotonically decreasing number of $\widehat{K}$ evaluations.
In the worst case,
$f(\tau,\varrho)$ is the maximum number of iterations. 

\begin{proposition}
\texttt{psKC} performs a maximum number of iterations $f$ in the cluster expansion process for each cluster, which is independent of data size, that is a function of $\tau$ and $\varrho$: \[f(\tau,\varrho) = \biggl\lfloor  \frac{\log{\tau}}{\log{(1-\varrho)}} \biggr\rfloor .\] 
\end{proposition}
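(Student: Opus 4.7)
The plan is to unroll the geometric update of the threshold variable $\gamma$ in Algorithm~\ref{alg:pset-KC} and find the largest iteration index for which the loop condition $\gamma > \tau$ can hold. Since the only mechanism that can terminate the for-loop in lines 11--14 (other than exhausting unassigned points) is the shrinkage of $\gamma$ below $\tau$, bounding the iteration count reduces to analyzing a simple geometric recurrence on $\gamma$.

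First I would record that $\gamma$ is initialized to $(1-\varrho)\widehat{K}(x_q,\{x_p\})$ before the loop and is multiplied by $(1-\varrho)$ at the end of every pass, so at the start of iteration $i\ge 1$ one has $\gamma_i = (1-\varrho)^i\,\widehat{K}(x_q,\{x_p\})$ by a one-line induction. Next I would invoke the fact that $\widehat{K}(\cdot,\cdot)$ inherits its range from Isolation Kernel---which satisfies $\kappa(u,v)\in[0,1]$ with $\kappa(u,u)=1$---and that $\widehat{K}(x_q,\{x_p\})=\kappa(x_q,x_p)$ in particular, so the maximum-iteration (worst case) configuration is approached when $\widehat{K}(x_q,\{x_p\})\to 1$. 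Substituting this worst-case value into the loop condition $\gamma_i>\tau$ yields $(1-\varrho)^i>\tau$. Taking logarithms on both sides, and noting that $\log(1-\varrho)<0$ and $\log\tau<0$ (so dividing by $\log(1-\varrho)$ flips the inequality exactly once), I obtain the equivalent condition $i<\log\tau/\log(1-\varrho)$. The largest integer $i$ satisfying this is $\lfloor\log\tau/\log(1-\varrho)\rfloor$, which is precisely the claimed $f(\tau,\varrho)$.

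The formula manifestly contains no occurrence of $|D|$, which establishes the second half of the statement---that $f$ is independent of data size. I do not anticipate any serious obstacle here; the only mild subtlety is the measure-zero edge case in which $\log\tau/\log(1-\varrho)$ is itself an integer, in which case the loop exits one step earlier, but the stated formula remains a valid tight upper bound. The real content of the proposition is conceptual rather than computational: the geometric shrinkage $\gamma_i=(1-\varrho)^i\,\widehat{K}(x_q,\{x_p\})$ decouples the iteration count from $n$ entirely, which is what enables the linear-time behaviour promised in the introduction.
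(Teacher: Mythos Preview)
Your proposal is correct and follows essentially the same route as the paper: bound the initial threshold by its maximal value, unroll the geometric recurrence $\gamma_i=(1-\varrho)^i$, and solve $(1-\varrho)^f=\tau$ for $f$. Your version is more careful (tracking $\widehat{K}(x_q,\{x_p\})\le 1$ explicitly, handling the sign flip when dividing by $\log(1-\varrho)$, and noting the integer edge case), but the underlying argument is identical to the paper's two-line proof.
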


\begin{proof}
By setting the initial $\gamma=1$ to its maximum value before the iteration in lines 11-14, the maximum number of iterations $f$ is reached when $(1-\varrho)^f = \tau$. As it is independent of the data size, this provides the proof. 
\end{proof}

This is a very useful property in practice because one can use it to bound the number of iterations. For example, the settings of $\tau \in [10^{-5},10^{-1}]$ and $\varrho \in [0.1,0.26]$, that we have used in the experiments reported in Section \ref{sec_experiment}, bound $f \in [8,109]$.

In comparison, the time complexity of k-means is superpolynomial in the worst case which requires  $f'(n)=2^{\Omega{(\sqrt{n})}}$ iterations \cite{k-means-SCG2006}, where its time complexity is $O(f'(n) nkd)$ \cite{k-means-1979}. Thus, the key difference is the number of iterations, i.e., \texttt{psKC} controls $f(\tau,\varrho)$ via parameter setting which is independent of $n$; k-means has no such mechanism and $f'(n)$ is a function of $n$. In practice, \texttt{psKC} has its iterations bounded by the search ranges of $\tau$ and $\varrho$, as we have shown earlier, giving it a linear time complexity. Though $f'(n)$ is typically a small fraction of $n$ in practice, no proof has been provided thus far.

\begin{table}[h]
 \centering
 \caption{Time complexities of \texttt{psKC} and k-means.\protect\linebreak $t$ and $\psi$ are parameters of Isolation Kernel; $d$ is number of dimensions.}
\label{tab:time_complexity}
 \begin{tabular}{@{\ }lp{5.5cm}@{}r@{\ }}
  \toprule
1 & Build Isolation Kernel (IK) & $dt\psi$\\
2 & Mapping $D$ of $n$ points using feature map of IK & $n dt\psi$ \\ 
3 & $f(\tau,\varrho)$ iterations in lines 11-14 for $k$ clusters  & $f(\tau,\varrho) n kdt \psi$\\ 
\multicolumn{2}{l}{Total time cost for \texttt{psKC}}  & $f(\tau,\varrho) nkdt \psi$\\ \midrule
\multicolumn{2}{l}{k-means, $f'(n)$ is typically a small fraction of $n$ in practice} 
& $f'(n) nkd$\\
  \bottomrule
 \end{tabular}
\end{table}

\section{Conceptual differences in comparison with existing clustering algorithms}
\label{sec_conceptual-diff}
The proposed clustering algorithm \texttt{psKC} is unique among the existing clustering algorithms in two key aspects:
\begin{itemize}
    \item It is the only clustering algorithm which utilizes the \textbf{point-set kernel that employs Isolation Kernel}.
    \item The algorithmic design is unique: the main step in the procedure employs the proposed point-set kernel to grow every cluster from a seed.
\end{itemize}

Both the point-set kernel and the procedure are crucial in achieving a clustering algorithm which is both effective and efficient. 

The finite dimensional feature map of Isolation Kernel and its use in the point-set kernel enable the algorithm to achieve its full potential: runtime proportional to data size ($n$)---a level unable to be achieved by existing effective clustering algorithms such as DP \cite{DensityPeak-2014}, and even less effective but efficient algorithms such as scalable kernel k-means \cite{Scalable-kMeans-JMLR19}. They have at least $n^2$ runtimes.

Existing clustering algorithms have the following features:
 \begin{itemize}
    \item[(a)] Density-based clustering algorithms such as DP and DBSCAN rely on point based density estimation that in turn relies on point-to-point distance calculations. This is the key reason why they both have $n^2$ runtime.  
     \item[(b)] Kernel k-means \cite{Scalable-kMeans-JMLR19,KNNKernel}) is weaker  than \texttt{psKC} for two reasons. First, the clustering algorithm employed, i.e., k-means, produces clusters that are limited to globular shape and approximately the same size \cite{AgarwalDMBook2015}. These limitations are inherited in feature space with the use of kernel. That is the reason why kernel k-means could not identify clusters of arbitrary shapes and different sizes (see Table \ref{tab:compare} for details). Second, the use of a kernel which has intractable dimensionality and is data independent. The intractable dimensionality issue necessitates a kernel functional approximation \cite{Nystrom_NIPS2000} in order to produce a finite-dimensional feature map. This approximation  reduces the quality of the final clustering outcome. The data independence issue (often ignored/unrecognized) also leads to poorer clustering outcomes.
 \end{itemize}
 
It is possible to view the centers in kernel k-means as a kind of kernel mean map (defined in Equation \ref{eqn_kernel-mean-map}) because each    cluster center is defined as the average position of all points in a cluster in the feature space. However, both limitations mentioned above have constricted the types of clusters it can detect. 

In a nutshell, many existing clustering algorithms compute point-to-point distance/kernel to derive the required similarity to characterize clusters. Even in algorithms that can be considered to have used point-set kernel, their cluster characterizations have severe limitations.  In contrast, the proposed algorithm utilizes the point-set kernel built from Isolation Kernel. This allows clusters of arbitrary shapes, sizes and densities to be characterized successfully and efficiently.

A more detailed comparison with kernel k-means is instructive.
Kernel k-means uses the following objective function:
\begin{equation}
\argmin_{G^1,\dots,G^k} \frac{1}{n} \sum^k_{j=1} \sum_{x \in G^j} \parallel \Phi(x) -  \widehat{\Phi}(G^j) \parallel^2_2
\label{eqn_minimzing-sum-squared}
\end{equation}

The difference in objective functions between \texttt{psKC} and kernel k-means is small (between Equations \ref{eqn_maximzing-dot-product} and \ref{eqn_minimzing-sum-squared}), i.e., the latter uses a squared distance; and the former employs a dot product.

The key differences are the algorithms used to achieve these objectives.
The k-means procedure iterates over two main steps: the center computation for each group; and the reassignment of every point to its nearest center. The simplicity is k-means's advantage in time complexity over other existing algorithms. But it also gives rise to its shortcomings. First, the initialization is a random grouping of clusters, and the number of clusters is specified by the user. The initial random grouping could lead to (a) counter-intuitive clustering outcomes even when the number of clusters is selected correctly; and (b) the clustering results are unstable from one run to the next. This means that k-means can easily get trap in local minimum. Second, the procedure always produces spherical clusters of similar size in the feature space of a kernel. Third, it recomputes distances to centers for many points which do not change the assignment in the previous iteration. This constitutes a large expense of the computational cost which adds no value but is necessary in the procedure.  Fourth, the use of kernel mean map to characterize each center makes k-means sensitive to outliers. In other words, the above shortcomings of kernel k-means are algorithmic, not due to a specific kernel used.

The \texttt{psKC} procedure does not have these shortcomings. First, the algorithm is deterministic, given a kernel function and the user-specified parameters. This resolves the instability issue and often leads to better clustering outcomes.  The only randomisation is due to the Isolation Kernel. The use of most similar points in $D$ as seeds is much more stable, even with different initializations of Isolation Kernel, compare with random groupings of clusters which can change wildly from one run to the next. Second, \texttt{psKC} enables detection of clusters of arbitrary shape, of different sizes and densities. Third, \texttt{psKC} commits  each point to a cluster once it is assigned; and  most points which are similar to the cluster never need to be reassigned. This is possible because of the use of a seed to grow a cluster. Points which are similar to a cluster grown from the seed will not be similar to another cluster if the points are less similar to the seeds of other clusters in the first place. The sequential determination of seeds (as opposed to the parallel determination of centers in k-means) makes that possible. As a result, \texttt{psKC} avoids many unnecessary recomputations in k-means mentioned earlier. Fourth, in contrast to k-means, the use of kernel mean map, not as a cluster center, but as a medium to expand a growing cluster  in \texttt{psKC} makes it robust to outliers. This is because outliers are least similar to a growing cluster; they are thus unlike to be included in the cluster.

There are two other important differences: \textbf{\texttt{psKC} is not an optimization algorithm}; whereas k-means is an expectation-and-maximization optimization method; and yet the clustering outcome of \texttt{psKC} is already close to the final maximization objective. The post-processing, literally tweaks at the edges, by reexamining those lowest similarity points wrt each cluster for possible better reassignments.

The relationship of spectral clustering and kernel k-means is given in Appendix \ref{kernel-k-means-other-issues}.

We show in the next section that the proposed clustering algorithm \texttt{psKC} is both highly efficient and producing good clustering outcomes in an empirical evaluation.

\setlength{\fboxrule}{2pt}
\begin{table*}
 \centering
  \caption{Artificial datasets: Clustering outcomes of \texttt{psKC}, DP, two versions of kernel k-means and one recent k-means-based SGL \cite{StructureGraphLearning-2021}. \protect\linebreak
  The results with yellow frames indicate good clustering outcomes; and those without have poor clustering outcomes.} \label{tab:compare}
 \begin{tabular}{@{}c@{}c@{}c@{}c@{}c@{}c}
  \toprule
Dataset & \texttt{psKC} & DP & \multicolumn{2}{c}{Kernel k-means} & SGL (k-means-based)\\ \cline{4-5}
 & Isolation Kernel & Distance & Gaussian kernel & kNN kernel & Distance\\
  \midrule
\raisebox{5.0\normalbaselineskip}[0pt][0pt]{Ring-G} & \fcolorbox{yellow}{white}{\includecombinedgraphics[scale=0.38,vecfile=2U_ring_2G_4th_cls-eps-converted-to]{2U_ring_2G_4th_cls}} & \includecombinedgraphics[scale=0.38,vecfile=2U_ring_2G_DP-eps-converted-to]{2U_ring_2G_DP} & \includecombinedgraphics[scale=0.38,vecfile=2U_ring_2G_k_kmeans-eps-converted-to]{2U_ring_2G_k_kmeans} & \includecombinedgraphics[scale=0.38,vecfile=2U_ring_2G_kNN-eps-converted-to]{2U_ring_2G_kNN} & \includecombinedgraphics[scale=0.38,vecfile=2U_ring_2G_SGL-eps-converted-to]{2U_ring_2G_SGL} \\
\raisebox{5.0\normalbaselineskip}[0pt][0pt]{AC} & \fcolorbox{yellow}{white}{\includecombinedgraphics[scale=0.38,vecfile=2C_new_mkVI-eps-converted-to]{2C_new_mkVI}} & \fcolorbox{yellow}{white}{\includecombinedgraphics[scale=0.38,vecfile=dp_AC_mkVI-eps-converted-to]{dp_AC_mkVI}} & \includecombinedgraphics[scale=0.38,vecfile=k_nys_2C_new_mkVI-eps-converted-to]{k_nys_2C_new_mkVI} & \fcolorbox{yellow}{white}{\includecombinedgraphics[scale=0.38,vecfile=kNN_AC_mkVI-eps-converted-to]{kNN_AC_mkVI}} &
\includecombinedgraphics[scale=0.38,vecfile=SGL_AC_mkVI-eps-converted-to]{SGL_AC_mkVI}\\ 
\raisebox{5.0\normalbaselineskip}[0pt][0pt]{Aggreg.} & \fcolorbox{yellow}{white}{\includecombinedgraphics[scale=0.38,vecfile=aggregation-eps-converted-to]{aggregation}}  & \fcolorbox{yellow}{white}{\includecombinedgraphics[scale=0.38,vecfile=dp_aggregation-eps-converted-to]{dp_aggregation}}  & \includecombinedgraphics[scale=0.38,vecfile=k_nys_aggregation-eps-converted-to]{k_nys_aggregation} & \includecombinedgraphics[scale=0.38,vecfile=kNN_aggregation-eps-converted-to]{kNN_aggregation} & 
\includecombinedgraphics[scale=0.38,vecfile=SGL_aggregation-eps-converted-to]{SGL_aggregation}\\
\raisebox{5.0\normalbaselineskip}[0pt][0pt]{Spiral} & \fcolorbox{yellow}{white}{\includecombinedgraphics[scale=0.38,vecfile=spiral-eps-converted-to]{spiral}}  & \fcolorbox{yellow}{white}{\includecombinedgraphics[scale=0.38,vecfile=dp_spiral-eps-converted-to]{dp_spiral}}  & \includecombinedgraphics[scale=0.38,vecfile=k_nys_spiral-eps-converted-to]{k_nys_spiral} & \includecombinedgraphics[scale=0.38,vecfile=kNN_spiral-eps-converted-to]{kNN_spiral} & 
\includecombinedgraphics[scale=0.38,vecfile=SGL_spiral-eps-converted-to]{SGL_spiral}\\
\raisebox{5.0\normalbaselineskip}[0pt][0pt]{S3} & \fcolorbox{yellow}{white}{\includecombinedgraphics[scale=0.38,vecfile=s3_15_cls-eps-converted-to]{s3_15_cls}}  & \fcolorbox{yellow}{white}{\includecombinedgraphics[scale=0.38,vecfile=dp_s3-eps-converted-to]{dp_s3}}  & \fcolorbox{yellow}{white}{\includecombinedgraphics[scale=0.38,vecfile=k_nys_s3-eps-converted-to]{k_nys_s3}} & \fcolorbox{yellow}{white}{\includecombinedgraphics[scale=0.38,vecfile=kNN_s3-eps-converted-to]{kNN_s3}} &
\fcolorbox{yellow}{white}{\includecombinedgraphics[scale=0.38,vecfile=SGL_s3-eps-converted-to]{SGL_s3}}\\
\bottomrule
 \end{tabular}

\end{table*}

\section{Empirical evaluation}
\label{sec_experiment}
We empirically compare \texttt{psKC} with DP \cite{DensityPeak-2014}, scalable kernel k-means \cite{Scalable-kMeans-JMLR19} which employs Gaussian kernel,  kernel k-means which employs adaptive kernel \cite{KNNKernel} and a recent k-means-based algorithm SGL \cite{StructureGraphLearning-2021} in terms of clustering outcomes and runtimes.

Commonly used benchmark datasets as well as color images are used in the experiments. As all these datasets can be visualized, we present the clustering outcomes of the algorithms under comparison by showing their segmented images or two-dimensional plots for visual inspection. The runtime is measured in terms of CPU seconds (and include the GPU seconds when GPU is used.)
Color images are represented in the CIELAB color space\footnote{In three dimensions as defined by the International Commission on Illumination (http://www.cie.co.at/).}. 
Other experimental settings can be found in Appendix \ref{App_Addtional_result}.

We present the results in four subsections: The first reports the clustering outcomes; the second presents the runtime comparison; the third provides the result of a stability analysis; and the last examines the effect of the post-processing.

\subsection{Clustering outcomes}
\label{sec_clustering_outcomes}
The clustering outcomes of artificial datasets, image segmentation and finding groups of handwritten digits are presented in three separate sections below.
\subsubsection{Artificial datasets}
A comparison of clustering outcomes of five clustering algorithms on five benchmark datasets is provided in Table \ref{tab:compare}. It is interesting to note that DP did well in four benchmark datasets, but it did poorly on the Ring-G dataset, i.e., DP successfully identified one ring cluster and one Gaussian cluster; but split the second Gaussian cluster into two parts, where one part is joined with the second ring cluster. This is because DP has a weakness in identifying the correct peaks when (some) clusters are uniformly distributed and have varied densities. In this case, three out of the four peaks are identified in the two Gaussian clusters before other points are assigned to one of the peaks to form clusters.

All three k-means-based algorithms are weaker than DP as they did poorly on at least three out of the five datasets, i.e., Ring-G, Aggregation and Spiral. This is because of their use of k-means which has weaknesses in detecting clusters that have non-globular shapes \cite{Tan-2ED-2018}. The use of a kernel in both kernel k-means transfers these fundamental weaknesses from input space to feature space. The results in Table \ref{tab:compare} show that there is no guarantee that they can detect clusters of non-globular shapes in input space.

\setlength{\tabcolsep}{2pt}
\begin{table}
 \centering
  \caption{Clustering outcomes of \texttt{psKC} \& SGL on  Zhao Mengfu's Autumn Colors.}
 \label{fig:AutumnColors}
 \begin{tabular}{ccc}
  \toprule
\begin{turn}{90}  \hspace{2mm}{CIELAB Space}    
\end{turn} & 
 \subfloat[\texttt{psKC}]{\includegraphics[scale=0.07]{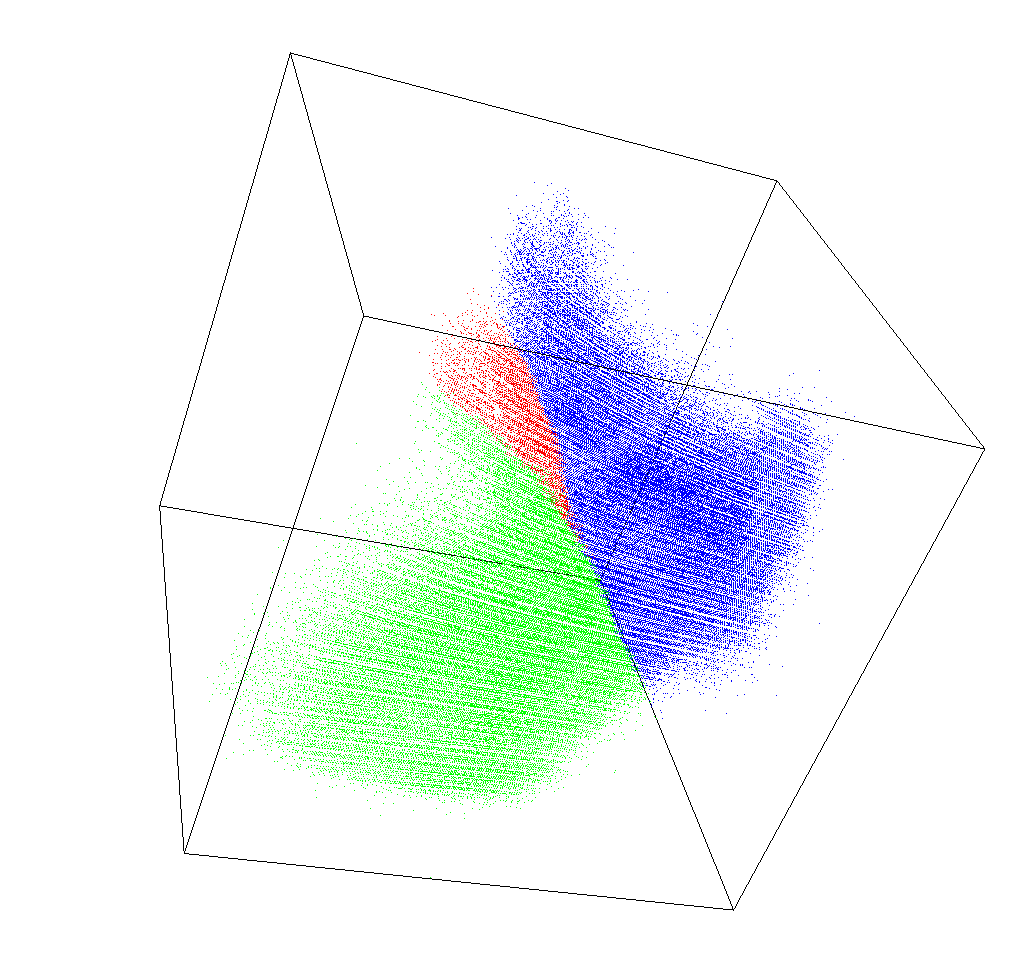}}  \subfloat[SGL]{\includegraphics[scale=0.09]{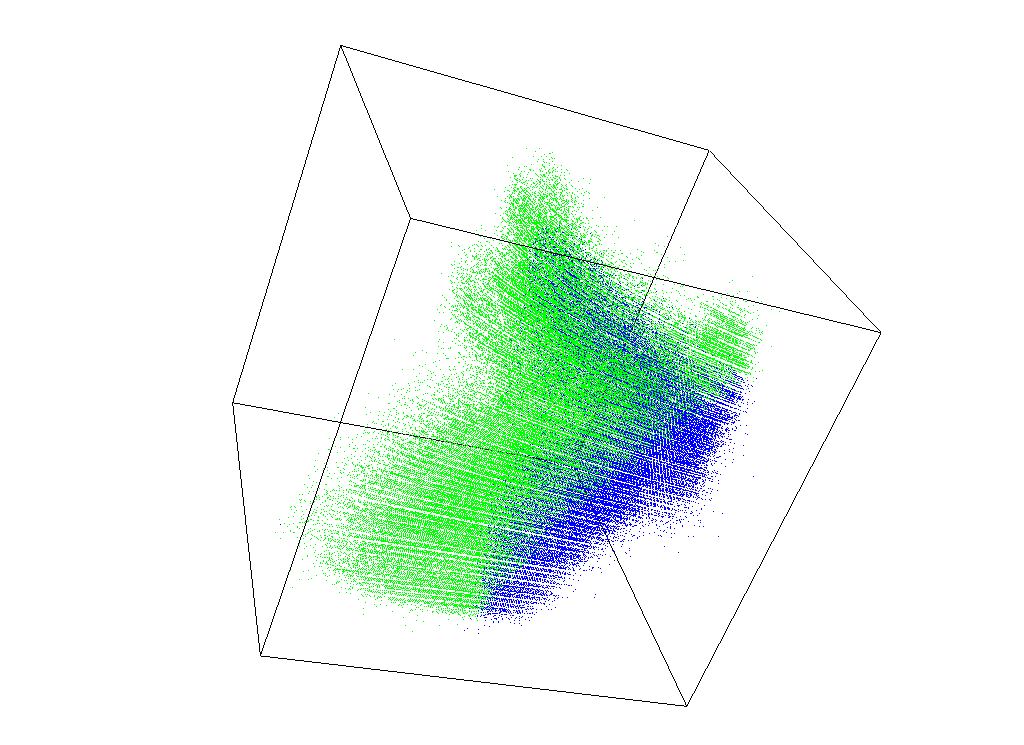}} \\
\begin{turn}{90}  \hspace{3.mm}{Original}  \end{turn}   & 
 \includegraphics[scale=0.1 ]{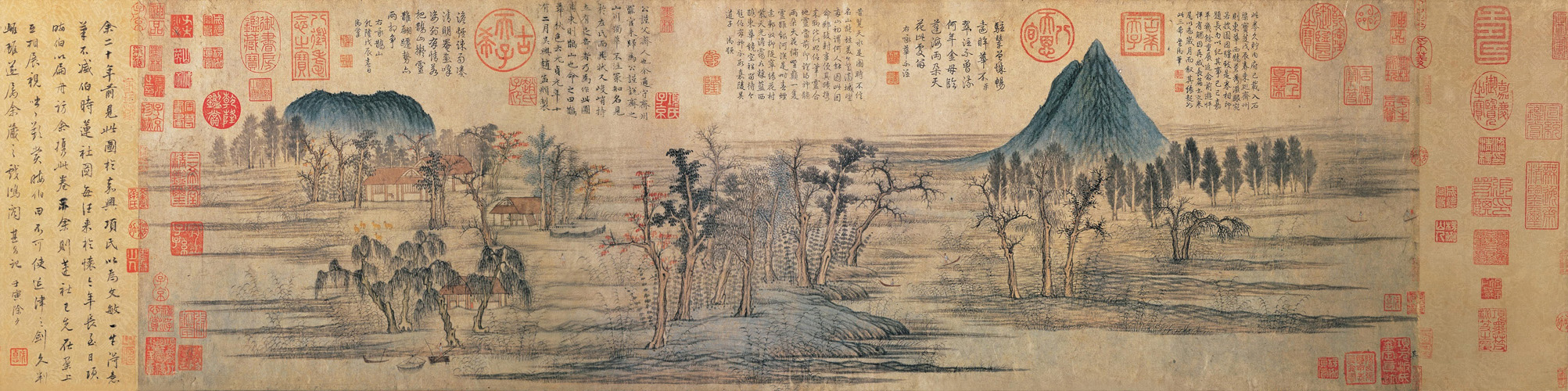}\\ \midrule
\begin{turn}{90}  {\texttt{psKC}} 
\end{turn} &  \includegraphics[scale=0.1 ]{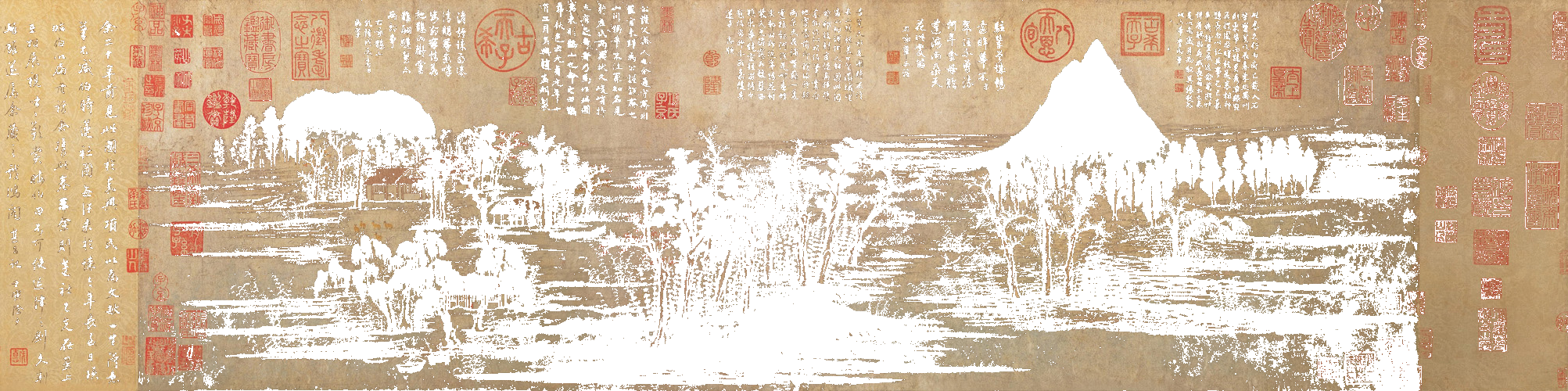}\\
 &  
\includegraphics[scale=0.1 ]{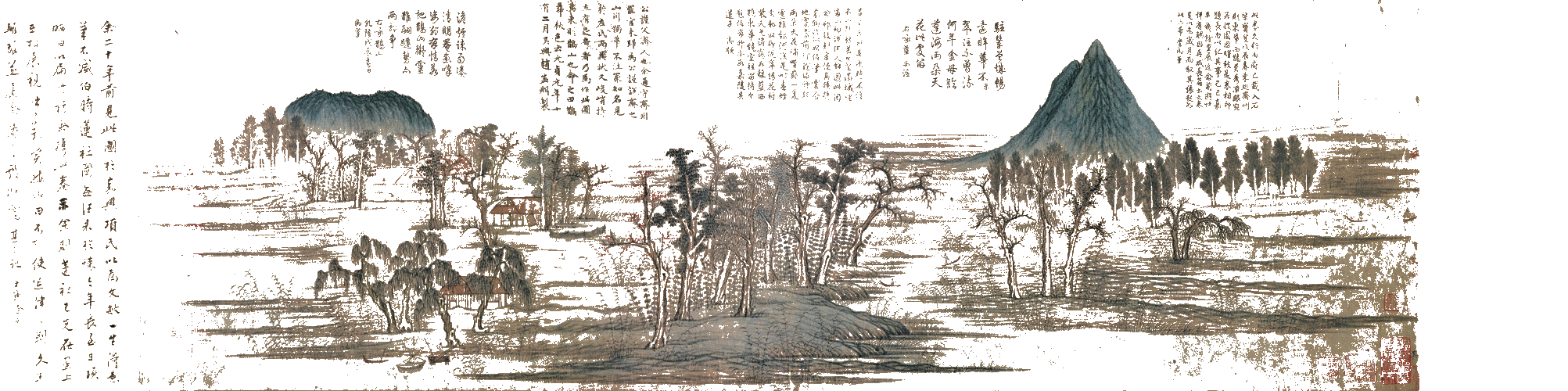} \\ 
 &  
\includegraphics[scale=0.1 ]{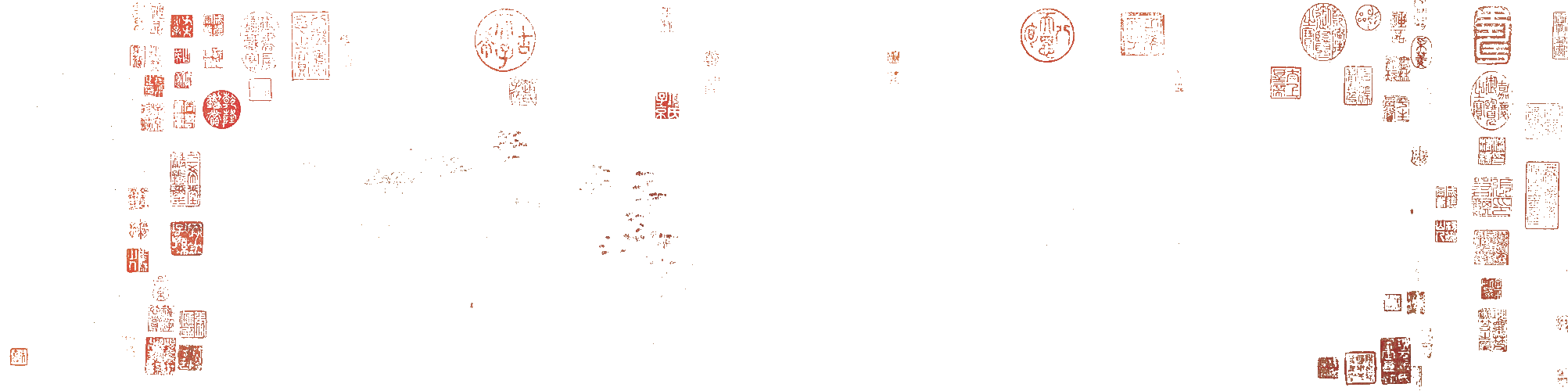} \\ 
\midrule
\begin{turn}{90}  {SGL} 
\end{turn} &  \includegraphics[width=0.38\textwidth]{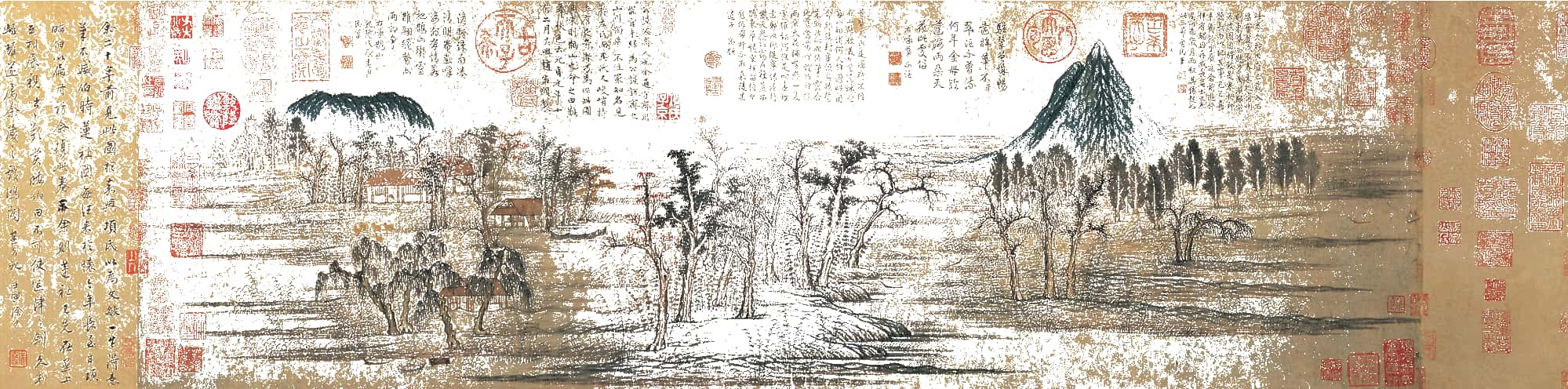}\\
 &  
\includegraphics[width=0.38\textwidth]{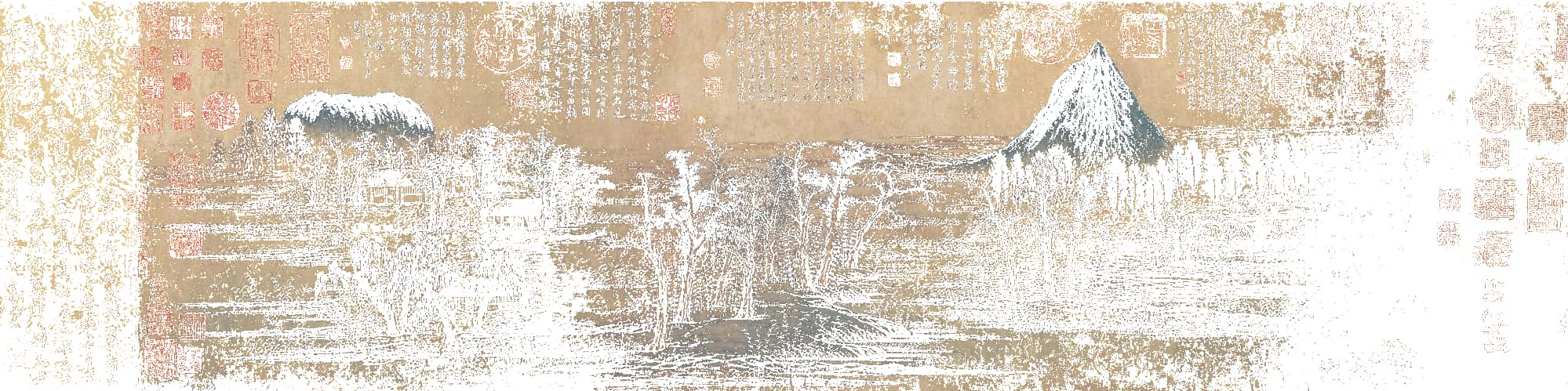} \\
\bottomrule
 \end{tabular}
\end{table}

\texttt{psKC} is the only algorithm that did well in all five datasets; and it is the only algorithm that successfully identified all four clusters in the Ring-G dataset. This is a direct result of the proposed cluster identification procedure which employs the point-set kernel. Other algorithms failed to correctly identify the four clusters because their algorithmic design which must determine all density peaks/centers before individual points can be assigned to one of the peaks/centers. 

The S3 dataset is the easiest to cluster. All five algorithms have good clustering outcomes though the outcomes differ slightly locally. The AC dataset is the second easiest. All algorithms, except kernel k-means with Gaussian kernel, have produced the perfect clustering outcome. Both Aggregation and Spiral were successfully clustered by \texttt{psKC} and DP; but both versions of kernel k-means  failed to separate all clusters correctly.

\subsubsection{Image segmentation}
Here we examine the ability of the five clustering algorithms in dealing with  images of high resolution.

\setlength{\tabcolsep}{0pt}
\begin{table}
 \centering
  \caption{Clustering outcomes of \texttt{psKC} and SGL on  Vincent van Gogh's Starry Night
  over the Rhone 2.}
 \label{fig:Starry-Night}
 \begin{tabular}{cc}
  \toprule
\begin{turn}{90}  \hspace{10mm}{Original}  \end{turn} \ \ \   &
 {\includegraphics[scale=0.11]{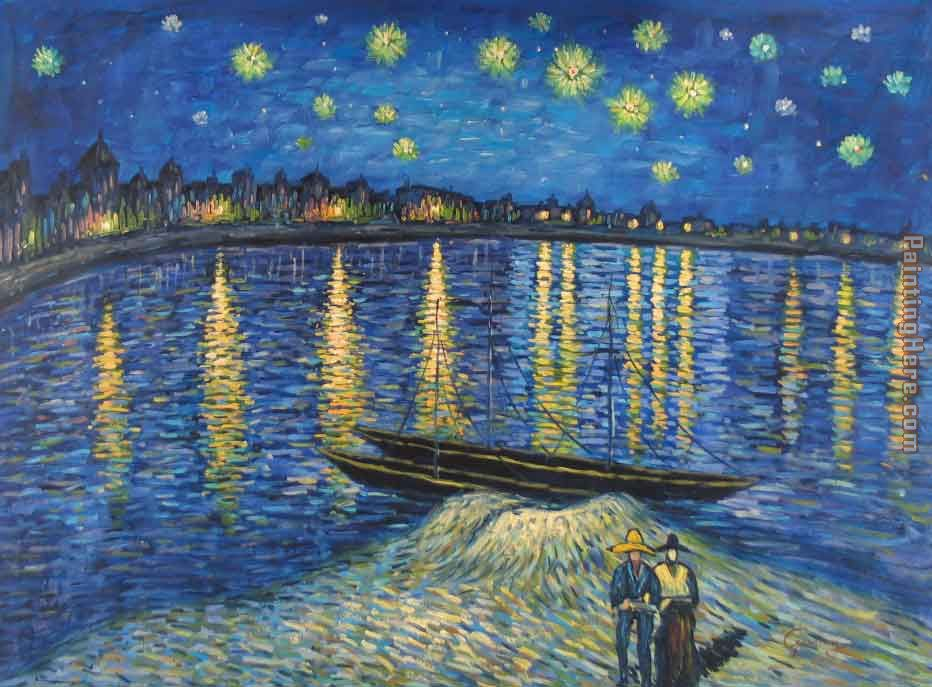}}
 \subfloat[ \texttt{psKC}]{\includegraphics[scale=0.08,trim={4cm 0cm 4cm 0cm},clip]{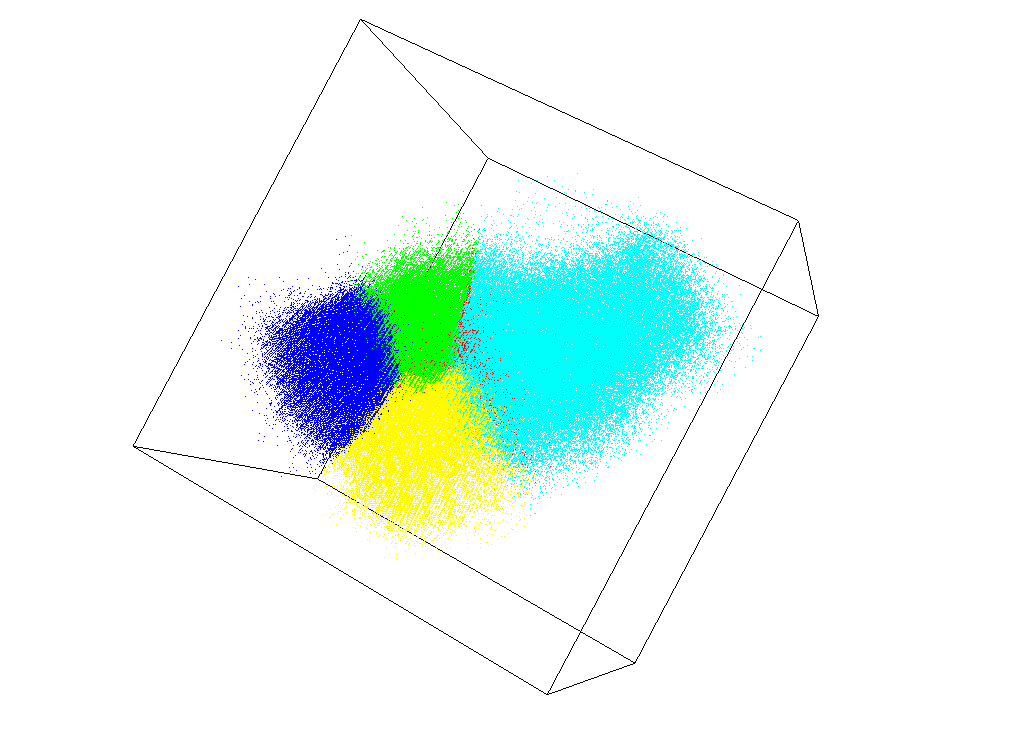}}
 \subfloat[SGL]{\includegraphics[scale=0.08,trim={5cm 0cm 5cm 0cm},clip]{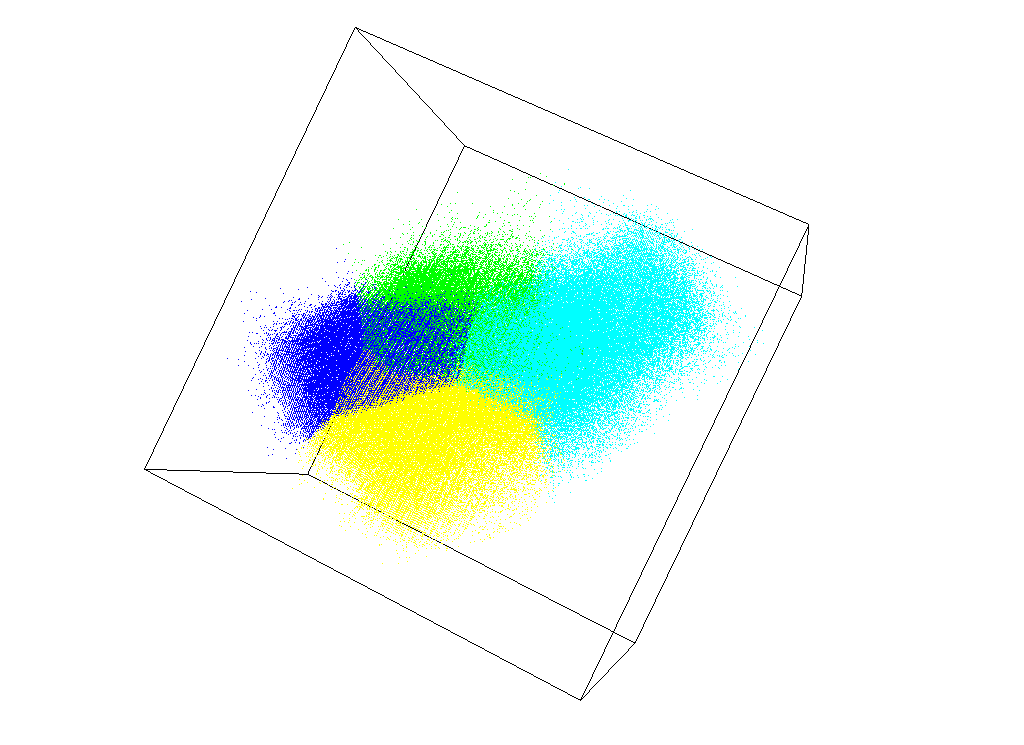}}
 \\ \midrule
\begin{turn}{90}  \hspace{3.mm}{\texttt{psKC}}  \end{turn}   & 
 \includegraphics[scale=0.11,trim={0cm 0cm 0cm 0cm},clip]{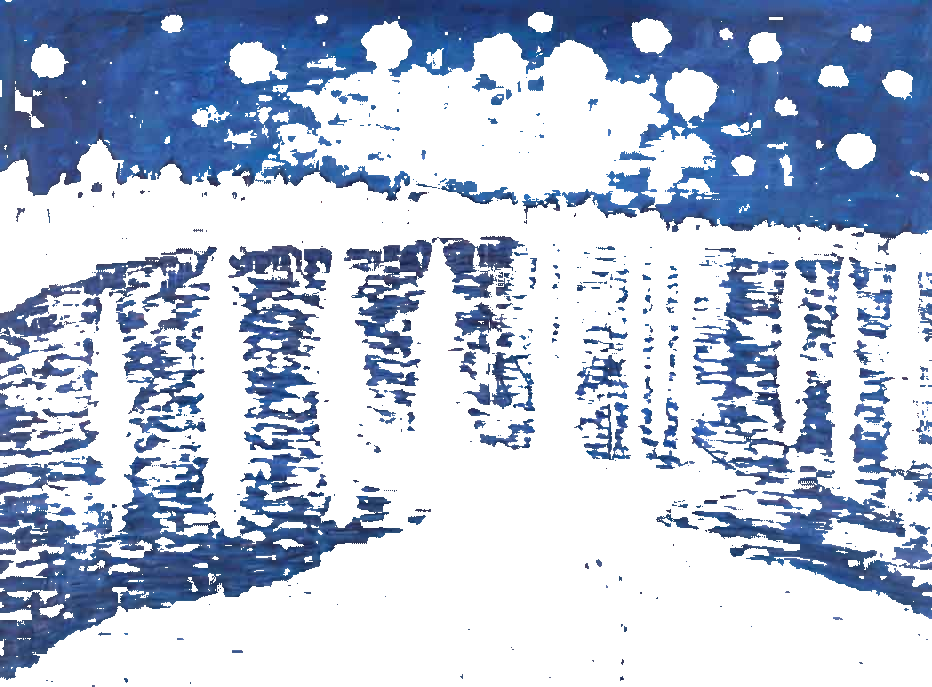}
 \includegraphics[scale=0.11,trim={0cm 0cm 0cm 0cm},clip]{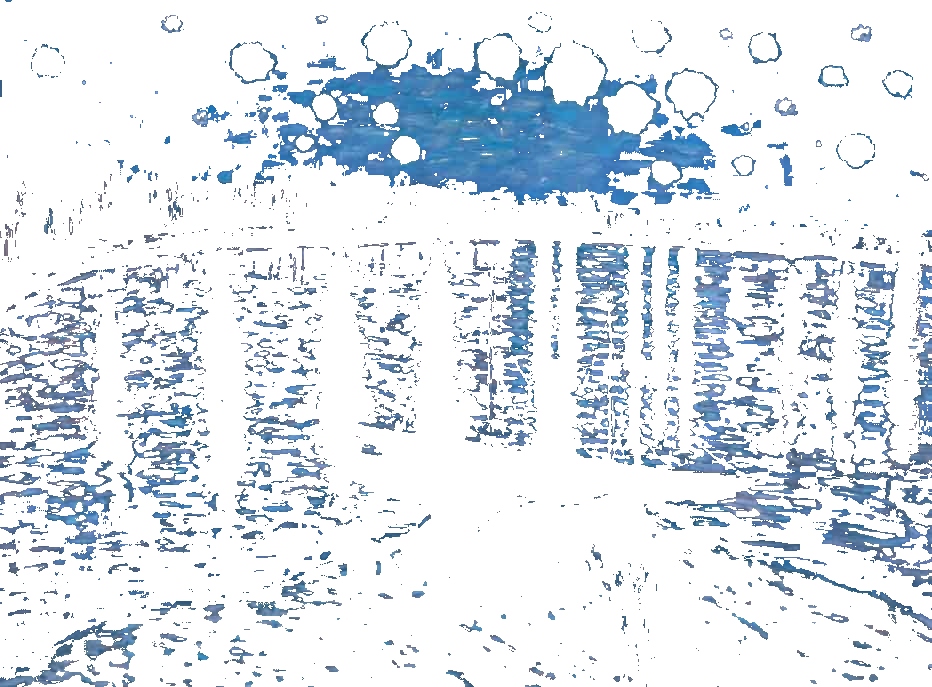}\\
 & \includegraphics[scale=0.11,trim={0cm 0cm 0cm 0cm},clip]{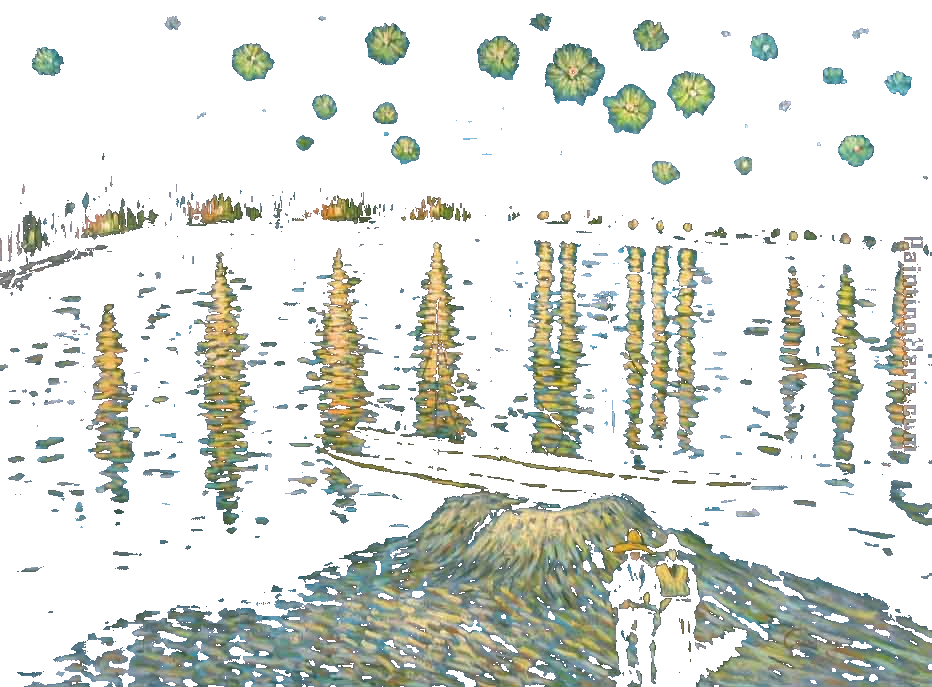} \includegraphics[scale=0.11,trim={0cm 0cm 0cm 0cm},clip]{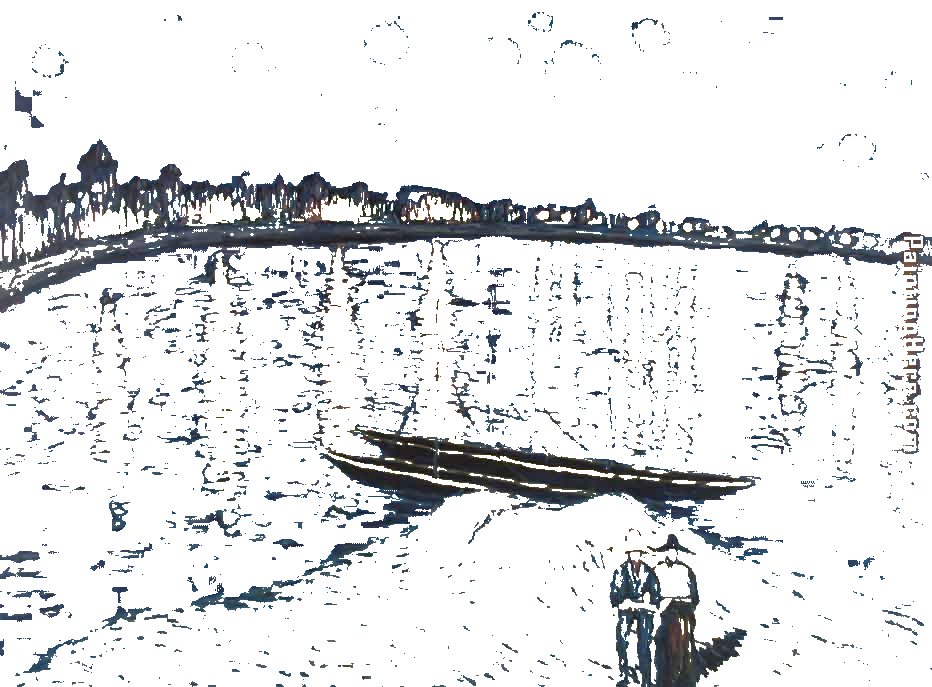}\\ \midrule
 \begin{turn}{90}  \hspace{3.mm}{SGL}  \end{turn}   & \includegraphics[width=0.2\textwidth]{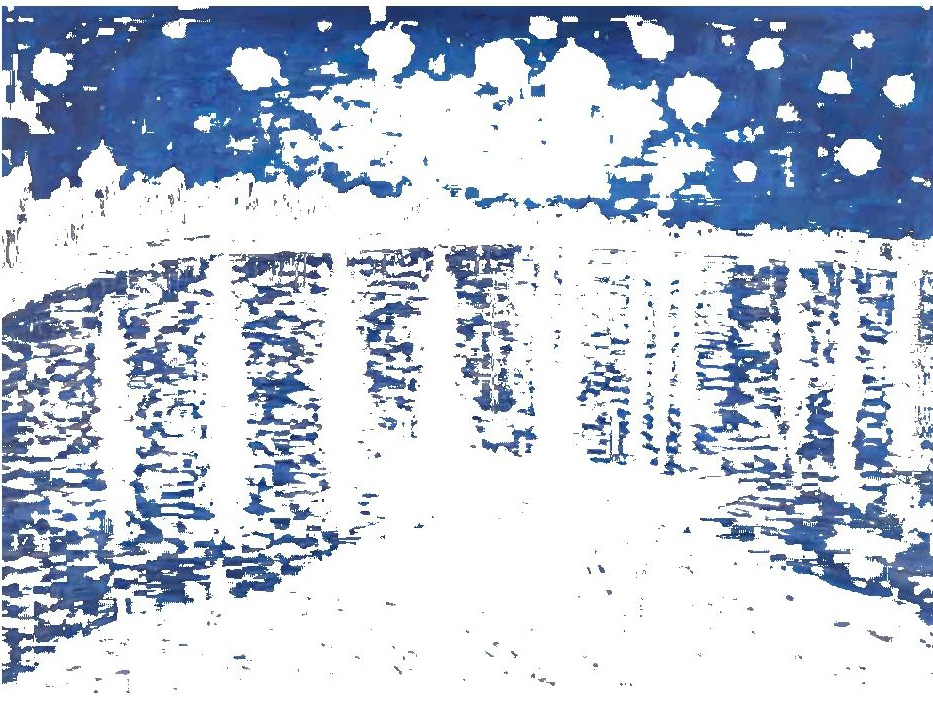}   \includegraphics[width=0.2\textwidth]{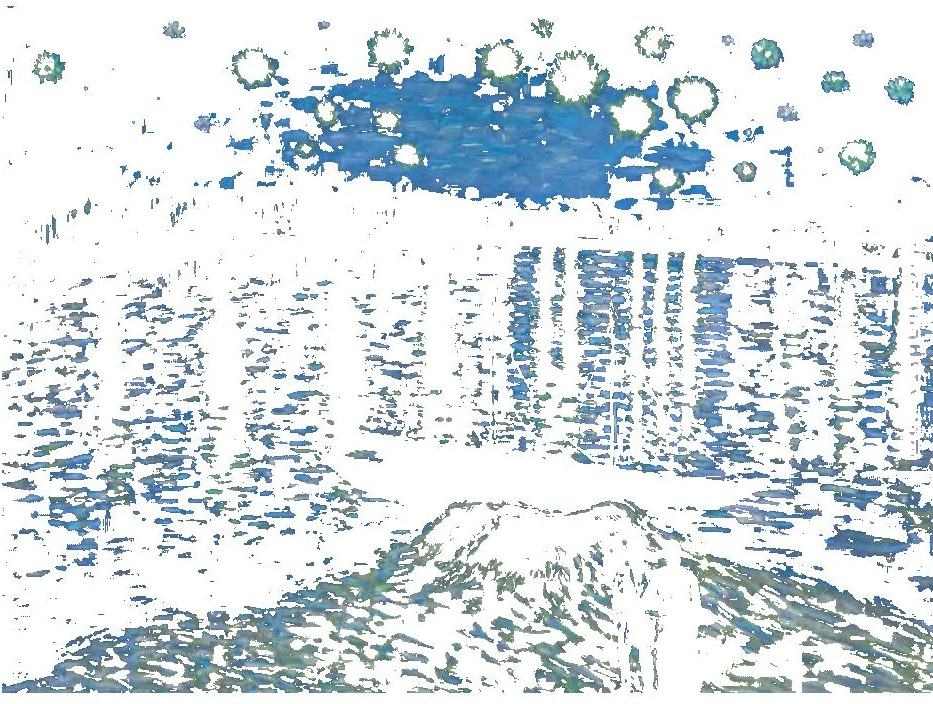}\\
 & \includegraphics[width=0.2\textwidth]{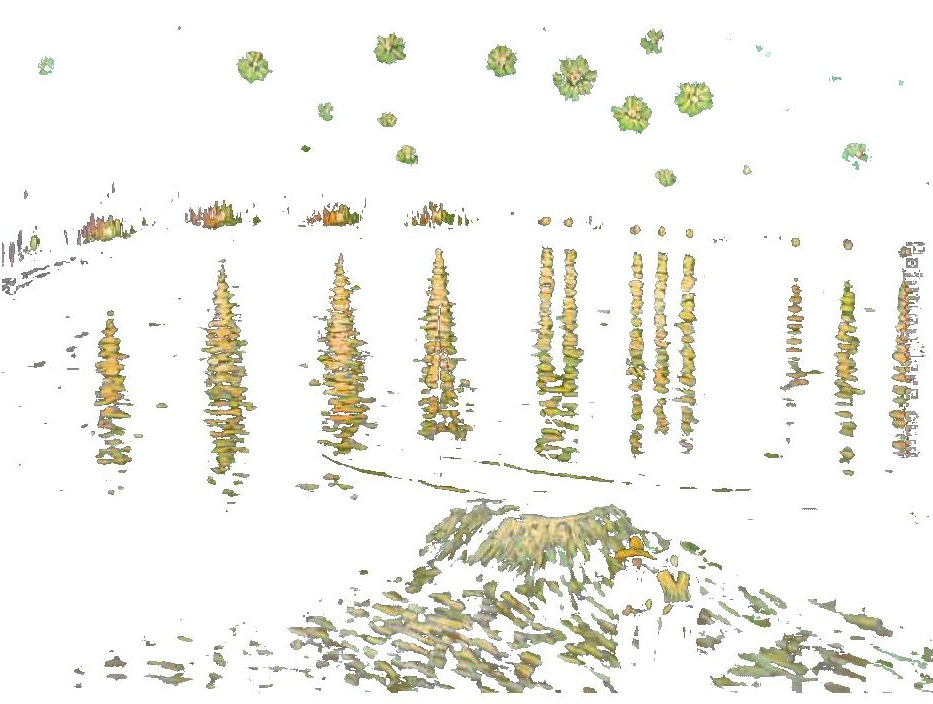}   \includegraphics[width=0.2\textwidth]{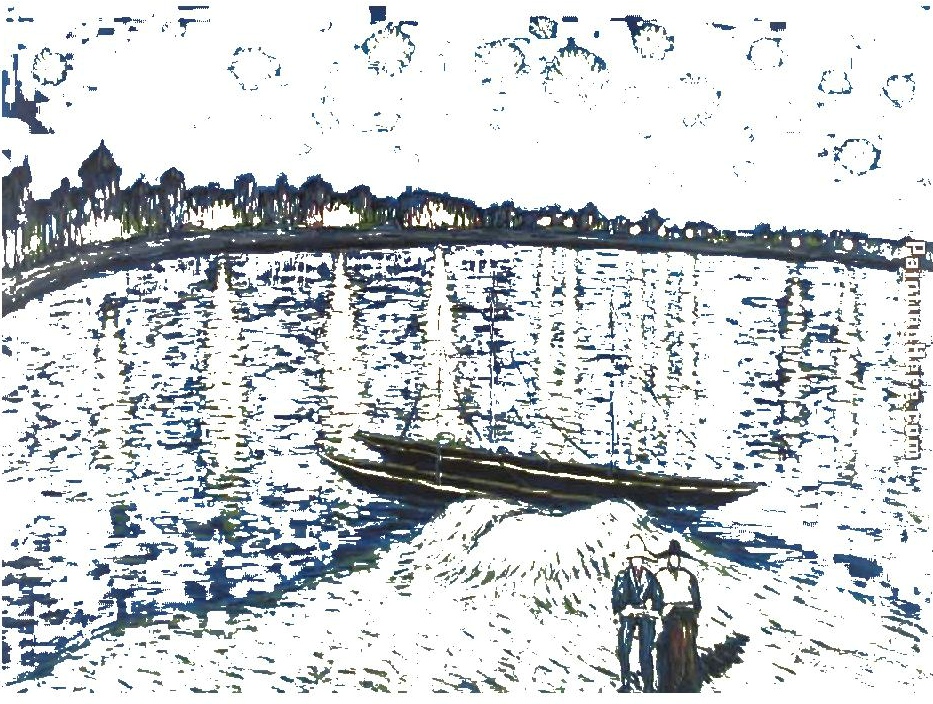}\\
 \bottomrule
 \end{tabular}
\end{table}

Out of the four contenders of \texttt{psKC}, only SGL could complete the run in reasonable time.

Table \ref{fig:AutumnColors}  shows the clustering outcomes of \texttt{psKC} and SGL on
Zhao Mengfu's Autumn Colors.
\texttt{psKC} separates the background (plus red stamps) of this painting from the landscape, producing two distinctive clusters. The red stamps can be extracted as a cluster on its own if a different parameter setting is used.

In contrast, SGL produces a clustering outcome which has less clear distinction between the background and landscape, and different portions of red stamps appear in separate clusters.

\texttt{psKC} and SGL have similar clustering outcomes on the van Gogh's painting, shown in Table \ref{fig:Starry-Night}.

None of DP and kernel k-means can complete in reasonable time on both images. Each of these images has a total of more than 1 million pixels. DP was unable to load the dataset on a machine with 256GB of main memory because of high memory requirement using Matlab. Scalable kernel k-means took $>4$ days (we terminated the run as it took too long to complete.) 

\subsubsection{Finding groups in a collection of handwritten images}

Here we show the groups of  handwritten digits found by \texttt{psKC} on the MNIST70k dataset which has 70,000 images.

The largest 13 clusters produced by \texttt{psKC} cover 92\% of the images in the dataset.
Table \ref{fig_mnist70k} shows  examples of these clusters in (a). Subfigure (b) in Table \ref{fig_mnist70k} shows the sample noise images which do not belong to large clusters. Notice that the noise digits have different handwritten styles from those in the large clusters.

\begin{table}
 \centering
  \caption{Clustering outcome of \texttt{psKC} and SGL on the MNIST70k data set. The columns from left to right show the most similar digit to the least similar digit from each of the 10 equal-frequency bins in each cluster, sorted by $\widehat{K}$ in Definition \ref{def:similarity_def} for \texttt{psKC} \& by distance for SGL.}
 \label{fig_mnist70k}
 \vspace{-4mm}
 \begin{tabular}{cc}
  \toprule
\begin{turn}{90}  \hspace{10mm}{\hspace{2cm}\texttt{psKC}}  \end{turn} \ \ \   &
 \subfloat[Largest 13 clusters]{\includegraphics[scale=0.50]{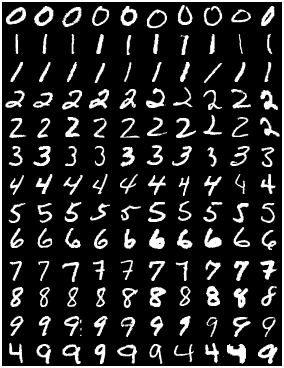}}
 \subfloat[Noise]{\makebox[2cm][c]{\includegraphics[scale=0.50]{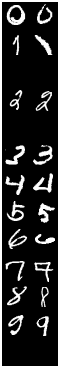}}}
 \\ \midrule
\begin{turn}{90}  \hspace{2cm}{SGL}  \end{turn} \ \ \   & 
\includegraphics[scale=0.5]{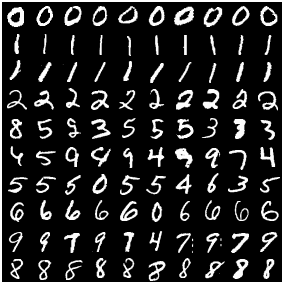}\\
\bottomrule
\end{tabular}
\end{table}

It is interesting to note that each of digits 1 \& 2 has been grouped into two clusters, where each cluster has its own written style, e.g., digit 1 has a vertical written style in the first cluster; and a slant style in the second cluster.

Also note that digits 4 \& 9 are grouped into three clusters. In addition of the two pure clusters, the third cluster  consists of both digits of 1:3 proportion. This is in contrast to the result produced by a kNN-graph based clustering algorithm RCC \footnote{RCC was unable to complete this task in 3 days on our machine;
whereas \texttt{psKC} ($\psi=5000$) took $<25$ minutes. 
See a brief description of RCC and its clustering outcomes of the artificial datasets in Appendix \ref{App_Addtional_result}.} (see Fig.3 in \cite{shah2017robust}), where both digits 4 \& 9 have been grouped into a single cluster. Also notice that the digits grouped in the third cluster have different written styles to those in the two pure clusters of 4 \& 9.

Table \ref{fig_mnist70k} also shows  the clustering outcome of SGL. Although SGL produces pure clusters for digits 0,1,2 and 8, it has mixed up multiple digits into individual clusters for other digits. Setting $k=13$ produced worse result. 

DP and the two versions of kernel k-means \cite{Scalable-kMeans-JMLR19,KNNKernel} were unable to complete this task in reasonable time.

\noindent
\textbf{Summary}: Only \texttt{psKC} and SGL could complete the clustering of all datasets/images used in the experiments. However, SGL produces poorer clustering outcomes than \texttt{psKC} on all datasets/images, except two on which they have similar outcomes.

\subsection{Scaleup test}
\label{sec_scaleup_test}
The result of a scaleup test using the MNIST8M dataset, which has a total of 8.1 million data points with 784 dimensions, is given in Figure \ref{fig:scaleup}(a). As expected, \texttt{psKC} has runtime linear to data set size, as the data size increases from 10k to 8.1 million (a factor of 810), its runtime increases by a factor of 1,171 (including both feature mapping (GPU) and clustering (CPU)). In contrast, 
as the data size increases from 10k to 40k (a factor of 4),
DP's runtime increases by a factor of 29; and kernel k-means using kNN kernel \cite{KNNKernel} by a factor of 17. Note that beyond 40k points,
DP took too long to run and kernel k-means using kNN kernel had memory error; and the dotted line of DP in Figure \ref{fig:scaleup}(a) is a projected line beyond 40k points. %
The k-means-based SGL is the only contender which runs linearly, but it is still slower than \texttt{psKC}. 

\begin{figure}[t]
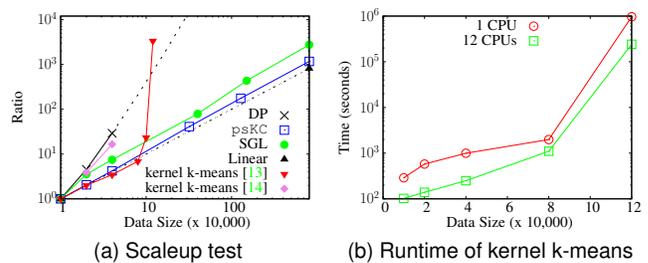

 \centering
 \subfloat[Scaleup test]{\includecombinedgraphics[scale=0.335,vecfile=pset_timing-eps-converted-to]{pset_timing}} \subfloat[Runtime of kernel k-means]{\includecombinedgraphics[scale=0.335,vecfile=kernel_k_means_timing-eps-converted-to]{kernel_k_means_timing}\label{fig:k-mean-runtime}} 
 \caption{Scaleup test on MNIST8M. 
 The base in computing the ratio is the runtime at 10k points. 
 DP crashed 
 at 320K points. `Linear' indicates the runtime of a linear-time algorithm in (a).}
 \label{fig:scaleup}
\end{figure}

Kernel k-means \cite{Scalable-kMeans-JMLR19} has two main components\footnote{Dimensionality reduction using PCA is often an intermediary between feature transformation and k-means in implementations. PCA is used in order to produce the desired rank-restricted Nystrom approximation, as required in their formulation (see Algorithm 2 in \cite{Scalable-kMeans-JMLR19}). However, some exposition may have omitted PCA, though it is used in the code, e.g., \cite{KNNKernel}.}: Nystr\"{o}m approximation \cite{Nystrom_NIPS2000} (to produce a finite dimensional feature map from a kernel of intractable dimensionality) and k-means. 
 
Using a supercomputer Cray XC40 system with 1632 compute
nodes, each has two 2.3GHz 16-core Haswell processors and 128GB of DRAM, the authors \cite{Scalable-kMeans-JMLR19} reported an experiment using scalable kernel k-means with $s=20$ on the MNIST8M dataset as follows: %

The parallelization reduces the runtime of Nystr\"{o}m, but it increases the runtimes of PCA and k-means. Thus, the net speedup is significantly less, e.g., increasing the number of compute nodes 16 times from 8 to 128, the net speedup is less than 4 times. It is even less when the number of target dimensions $s$ is increased. See Table 4 in \cite{Scalable-kMeans-JMLR19} for details. This shows that parallelization alone has diminishing payoff as the complexity of the problem increases. In addition, parallelization needs to increase massively as the data size increases in order to complete in reasonable time. 

Our experimental result in Figure \ref{fig:scaleup}(a) shows the same behaviour on a single-CPU machine.
Scalable kernel k-means has similar runtime ratios as \texttt{psKC} up to 80k. But its runtime began to dramatically increase 
at 120k---the runtime increase is now more than 3000 times on a 12-fold increase in data size!

Even with 12 CPUs, as shown in Figure \ref{fig:scaleup}(b), scalable kernel k-means took more than 240,000 seconds to complete the dataset of 120k points, while the 1-CPU machine took close to 1 million seconds (more than 11 days). This is a speedup of 4 times on a 12-fold increase in the number of CPUs. In other words, the parallelization works well in scalable kernel k-means only if the number of CPUs is sufficiently large such that each CPU works  on a small data set. Otherwise, a dramatic increase in runtime is expected, as shown in Figure \ref{fig:scaleup}(a).

In contrast, the algorithmic advantage of \texttt{psKC}, together with the use of the proposed point-set kernel, allows it to run on a standard machine of single-CPU (for clustering) and GPU (for feature mapping in preprocessing). 
This enables  the clustering to be run on a commonly available machine (with both GPU and CPU) to deal with large scale datasets.

In terms of real time: on the dataset with 40k data points, \texttt{psKC} took 73 seconds which consists of 58	GPU seconds for feature mapping and 15 CPU seconds for clustering. In contrast, DP took 541 seconds. The gap in runtime widens as data size increases: To complete the run on 8.1 million points, DP is projected to take 379 years! That would be 12 billion seconds which is six orders of magnitude slower than \texttt{psKC}'s 20 thousand seconds (less than 6 hours). The widening gap is apparent in Figure \ref{fig:scaleup}(a). SGL took 269 seconds on 40k points, and more than 200 thousand seconds (58 hours) on 8.1 million points.

For the dataset of 120k data points, \texttt{psKC} took 243  seconds;
whereas scalable kernel k-means took close to a million seconds, both on a one-CPU machine.

As it is, there is no opportunity for DP to do feature mapping (where  GPU could be utilised). While it is possible for kernel k-means to make use of GPU as in \texttt{psKC}, scalable kernel k-means's main restriction is PCA which has no efficient parallel implementation, to the best of our knowledge. The clustering procedures of both DP and \texttt{psKC} could potentially be parallelized; but this does not change their time complexities.

\subsection{Stability analysis}
\label{sec_stability}

k-means, used in kernel k-means and SGL, is a randomised algorithm; and its clustering outcome is influenced by the initial random centers. While the \texttt{psKC} procedure is deterministic, the Isolation Kernel employed is generated based on random samples. Here we examine the stability of these algorithms.

Figure \ref{fig:boxplot} shows the stability of the clustering outcomes in terms of F1 score \cite{larsen1999fast,aliguliyev2009performance} over 10 trials, presented in box plots.

\begin{figure}[t]
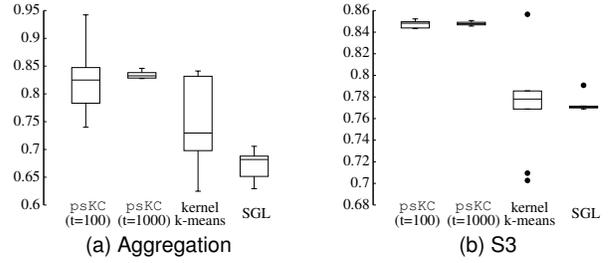

\vspace{-3mm}
 \centering
  \subfloat[Aggregation]{\includecombinedgraphics[scale=0.335,vecfile=bp_aggregation-eps-converted-to]{bp_aggregation}  \label{fig:boxplota}}
 \subfloat[S3]{\includecombinedgraphics[scale=0.335,vecfile=bp_s3-eps-converted-to]{bp_s3}  \label{fig:boxplotb}}

 \caption{Stability of clustering, presented in box plot based on 10 trials of the same parameter setting for each algorithm. The y-axis is F1 score.
 \label{fig:boxplot}
}
\end{figure}

The results show that kernel k-means produced clustering which have variance much higher than those produced by \texttt{psKC} on the middle 50\% results (showed as the box---small (large) box has low (high) variance). Kernel k-means produced wild outliers (see the three points outside the box) as shown on S3. Despite having its best result (the top outlier) is better than all other results,
 its two worst results (the bottom two outliers) are significantly worse than all other results.
Though SGL has lower variance than kernel k-means (because SGL chooses the best outcome from a number of k-means trials internally), its median F1 results (shown as the line inside the box) are worse than those of kernel k-means.

Overall, \texttt{psKC} ($t=100)$ produces higher F1 scores than kernel k-means and SGL.
In addition, Figure \ref{fig:boxplot} also shows that the variance can be significantly reduced by using a higher $t$ at the cost of longer runtime. For example, on S3, \texttt{psKC} ($t=100)$ took 1.0 seconds, and \texttt{psKC} ($t=1000)$ took 9.2 seconds.

\subsection{The effect of post-processing}

Table \ref{tbl_post-processing} shows the effect of post-processing of \texttt{psKC} on two datasets. As mentioned in Section \ref{sec_conceptual-diff}, the clustering outcome of \texttt{psKC} has already achieved a good approximation to the optimal maximization objective. The post-processing provides a final tweak to refine the approximation for the points having the lowest similarity only. On the small dataset S3, the post-processing did not make any re-assignment. On the large dataset Autumn Colors, the post-processing improved the objective function $\Gamma(D)$ by re-assigning three thousand points out of a million points, and it spent 18\% of the total runtime of 10 seconds.

\section{Discussion}
\label{sec_discussion}

\setlength{\tabcolsep}{4pt}
\begin{table}[t]
    \centering
    \caption{The effect of post-processing. Objective function $\Gamma(D)$ is Eq.4.}
    \label{tbl_post-processing} 
    \begin{tabular}{c|r|rrr}
    \hline
    &   {psKC} & \multicolumn{3}{c}{post-processing}\\
    &  $\Gamma(D)$ & $\Gamma(D)$ & \#points re-assigned & time (\%)\\
    \hline
S3 & 1,492 & 1,492 & 0 & 2\% \\ 
Autumn Colors & 294,906 & 297,562 & 3,074 & 18\%  \\
\hline
    \end{tabular}
\end{table}

\subsection{Isolation Kernel versus Gaussian Kernel}

To get the clustering outcomes of \texttt{psKC} we showed here, it is crucial that the point-set kernel employs Isolation Kernel  \cite{ting2018IsolationKernel,IsolationKernel-AAAI2019} which is data dependent. Employing a Gaussian Kernel, which is data independent, \texttt{psKC} will perform poorly on datasets with clusters of non-globular shape, different data sizes and/or densities. This is because its similarity measurement is independent of data distribution. See the clustering outcomes of \texttt{psKC}$_g$ which employs Gaussian kernel in Table \ref{tab:compare_2nd} in Appendix \ref{App_Addtional_result}.

In addition to poor clustering outcomes, the use of Gaussian kernel in point-set kernel has high computational cost. This is because it has a feature map having intractable dimensionality.

\subsection{Issue with running DP on a subsample}
\label{sec_DP_issue}
Our results show that density-based algorithm such as DP is a stronger clustering algorithm than kernel k-means, in terms of clustering outcomes. But DP is one of the most computationally expensive algorithms: it needs large memory space and its runtime is proportional to the square of data size ($n^2$).

It is possible to run DP on an image of high resolution by reducing its resolution first. However, the effect of this reduction can be counterproductive. For example, we have attempted to reduce the resolution of the Chinese painting: Autumn Colors, shown in Table \ref{fig:AutumnColors}. The resolution must be reduced from 1 million pixels to 90,000 pixels for DP to run on a machine with 256GB memory. As a result of this reduction, salient features of the painting were destroyed, e.g., some brush writings have become less obvious. With reduced density of pixels of brush writing, it is no more exhibited as a cluster of interest in its own right in the CIELAB space. Thus, DP is unable to identify the brush writing (together with the painting using the same brush color) as a cluster. An example clustering outcome of DP is shown in Figure~\ref{fig:DP1}. Note that the background and brush writing/painting are not clearly separated into two different clusters by DP, unlike \texttt{psKC}'s clustering outcomes shown in Table \ref{fig:AutumnColors}. In a nutshell, reducing the resolution would add another issue to DP's existing weakness in identifying clusters of uniform distribution as exemplified by the Ring-G dataset shown in Table \ref{tab:compare}.

\begin{figure}[t]
\vspace{-3mm}
\includegraphics[scale=0.5, trim=40 30 0 0, clip]{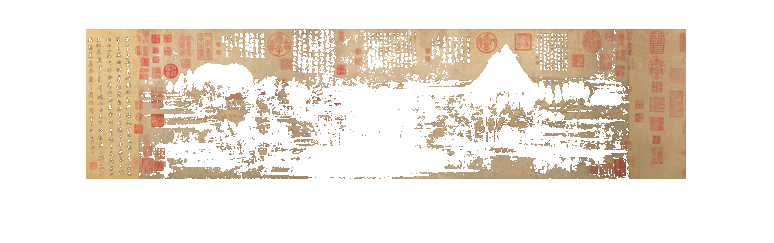}\\
\includegraphics[scale=0.5, trim=40 40 0 30, clip]{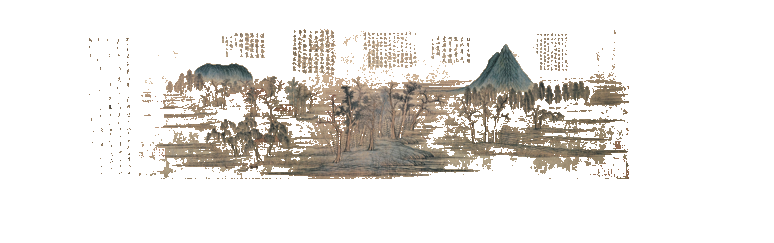}

 \caption{Clustering outcome of DP on the reduced resolution of the Chinese painting: Autumn Colors.}

 \label{fig:DP1}
\end{figure}

\subsection{Relation to Support Vector Clustering}
In the framework of support vector machines, Support Vector Clustering (SVC) \cite{SVC:2001} describes a way to build a minimal enclosing sphere in feature space (induced by a kernel function) to capture the majority of the points in the given dataset. This approach is computationally expensive. Approximation methods have been developed to reduce its computational demands. For example, a subsampling method is proposed to reduce the learning cost in constructing the minimal enclosing sphere, and then employ Voronoi cells to assign points to different clusters \cite{Voronoi-cell-based-clustering}. This subsampling approximation method suffers from the same issue we described in the last subsection.

The only connection between  \texttt{psKC} and SVC is the use of a kernel. Still, \texttt{psKC} employs a  data dependent kernel called Isolation Kernel. Like all existing kernel-based methods, SVC employs a data independent kernel such as Gaussian kernel. 

\subsection{Some algorithmic weakness cannot be overcome by changing the distance function}

A previous work \cite{IsolationKernel-AAAI2019} has shown that the clustering outcome of DBSCAN \cite{DBSCAN-1996} can be improved
by simply replacing Euclidean distance with Isolation Kernel. However, the resultant algorithm called MBSCAN \cite{IsolationKernel-AAAI2019}, though performs better than DBSCAN on the Ring-G and S3 datasets shown in Table \ref{tab:compare_2nd} in Appendix \ref{App_Addtional_result}, MBSCAN still performs poorer than \texttt{psKC} on the S3 dataset. 

This is another example of the limitation of existing clustering procedures: DBSCAN has some inherent algorithmic weakness that cannot be overcome by using a data dependent kernel.

\subsection{Current approaches to runtime issue}
A considerable amount of research has been delegated to mitigate the persistent longstanding runtime issue of using a point-to-point distance/kernel. For example, various indexing techniques \cite{LSH-2015,DBSCAN-Revisited:2017,TingLiu2007} have been explored to reduce the $n^2$ runtime. While some have claimed to have achieved runtime proportional to $n$, this often comes with the cost of reduced task-specific performance \cite{LSH-2015,DBSCAN-Revisited:2017, TingLiu2007}.  

For the point-to-point kernel-based methods, an alternative to indexing is to use kernel functional approximation \cite{Nystrom_NIPS2000,Nystrom-NIPS12} to speed up their runtimes, e.g., the scalable kernel k-means \cite{Scalable-kMeans-JMLR19} mentioned above. 

Both approaches of indexing and kernel functional approximation reduce runtime by sacrificing task-specific performance.
In contrast, \texttt{psKC} has its runtime proportional to $n$, \textbf{needing neither indexing nor kernel functional approximation}. As a result, its clustering outcomes are not compromised.

Parallelization is a way to distribute the workload, with/without any of the two approaches mentioned above. A massive parallelization enables an algorithm to run on a large dataset that was impossible with a single CPU. Given a fixed number of CPUs, it sets a limit on the data size that it can handle,  as shown in Section \ref{sec_scaleup_test} with scalable kernel k-means. A massive parallelization could also be used to scale up \texttt{psKC}, like any other algorithms. In other words, what we have presented in this paper is more fundamental than and orthogonal to parallelization. 

\subsection{Linear-time clustering algorithms}

k-means \cite{k-means-macqueen1967} is the clustering algorithm of choice in many applications \cite{AgarwalDMBook2015}, including text clustering, graph clustering and hierarchical clustering (where k-means is the core clustering algorithm). This is because it often has linear runtime in practice \cite{AgarwalDMBook2015}, though its time complexity is superpolynomial in the worst case which requires  $2^{\Omega{(\sqrt{n})}}$ iterations \cite{k-means-SCG2006}. Being a linear-time kernel clustering algorithm, where the maximum number of iterations is fixed for some range of parameter settings,  psKC offers an alternative to k-means in these and many other applications.

\section{Conclusions} 
We show that the proposed clustering \texttt{psKC} outclasses  DP, DBSCAN and two versions of kernel k-means as well as other variants such as SGL and RCC in terms of both clustering outcomes and runtime efficiency. We identify that the  two root causes of shortcomings of existing clustering algorithms are (i) the use of \emph{data independent} point-to-point distance/kernel (where the kernel has a feature map with intractable dimensionality) to compute the required similarity; and (ii) the algorithmic designs that constrict the types of clusters that they can identify. For example, kernel k-means can be interpreted as already using a point-set kernel. Yet, because its use is restricted to represent cluster centers, only clusters of globular shape and approximately the same size can be detected in feature space; and this does not guarantee that clusters of non-globular shape and different sizes in input space can be detected.
This is also the cause of its sensitivity to outliers. 
Both root causes have led to poor clustering outcomes. 

The first root cause is the source of the longstanding runtime issue in density-based clustering algorithms that has prevented them from handling large scale datasets. 

We address these root causes by using a \emph{data dependent point-set kernel} and a new clustering algorithm which utilizes the point-set kernel as a medium to grow and characterize clusters---enabling clusters of arbitrary shapes and sizes to be detected and they are insensitive to outliers. As a result, \texttt{psKC} is the only clustering algorithm which is both effective and efficient---a quality which is all but nonexistent in current clustering algorithms. It is also the only kernel-based clustering that has runtime proportional to data size.

We show that 
the number of iterations in \texttt{psKC} is independent of data size. The parameter search ranges in \texttt{psKC} can be set to bound the number of iterations; this gives the linear time complexity in practice, enabling \texttt{psKC} to deal with large datasets. 

\section*{Acknowledgement}
Kai Ming Ting is supported by Natural Science Foundation
of China (62076120).

\bibliography{references}
\bibliographystyle{IEEEtran}

\clearpage
\newpage

\appendices
\section{Isolation Kernel}
\label{App_IK}
We provide the pertinent details of Isolation Kernel in this section. Other details can be found in \cite{ting2018IsolationKernel,IsolationKernel-AAAI2019}.

Let $D=\{{x}_1,\dots,{x}_n\}, {x}_i \in \mathbb{R}^d$ be a dataset sampled from an unknown probability density function ${x}_i \sim F$.  
Let $\mathds{H}_\psi(D)$ denote the set of
all partitionings $H$
that are admissible under $D$, 
where each $H$ covers the entire space of $\mathbb{R}^d$. Each of the $\psi$ isolating partitions $\theta[{z}] \in H$ isolates one data point ${z}$ from the rest of the points in a random subset $\mathcal D \subset D$, and $|\mathcal D|=\psi$, where $z \in \mathcal D$ has the equal probability of being selected from $D$.

\begin{definition} For ${x}, {y} \in \mathbb{R}^d$,
	Isolation Kernel of ${x}$ and ${y}$ wrt $D$ is defined to be
	the expectation taken over the probability distribution on all partitionings $H \in \mathds{H}_\psi(D)$ that both ${x}$ and ${y}$  fall into the same isolating partition $\theta[{z}] \in H, {z} \in \mathcal{D}$:
\begin{eqnarray}
\kappa_\psi({x},{y}\ |\ D) &=&  {\mathbb E}_{\mathds{H}_\psi(D)} [\mathds{1}({x},{y} \in \theta[{z}]\ | \ \theta[{z}] \in H)] 
\label{eqn_kernel}
	\end{eqnarray}
where $\mathds{1}(\cdot)$ is an indicator function.
\end{definition}

In practice, $\kappa_\psi$ is constructed using a finite number of partitionings $H_i, i=1,\dots,t$, where $H_i$ is created using $\mathcal{D}_i \subset D$:
\begin{eqnarray}
\kappa_\psi({x},{y}|D)  & = &  \frac{1}{t} \sum_{i=1}^t   \mathds{1}({x},{y} \in \theta\ | \ \theta \in H_i) \nonumber\\
 & = & \frac{1}{t} \sum_{i=1}^t \sum_{\theta \in H_i}   \mathds{1}({x}\in \theta)\mathds{1}({y}\in \theta) 
 \label{Eqn_IK}
\end{eqnarray}

\noindent
where $\theta$ is a shorthand for $\theta[{z}]$; and $\kappa_\psi({x},{y}|D) \in [0,1]$.

Isolation Kernel is positive semi-definite as Eq \ref{Eqn_IK} is a quadratic form, i.e., it defines a Reproducing Kernel Hilbert Space.

\textbf{aNNE Implementation}:
As an alternative to using trees \cite{liu2008isolation} in its first implementation of Isolation Kernel \cite{ting2018IsolationKernel}, a nearest neighbour ensemble (aNNE) has been used instead \cite{IsolationKernel-AAAI2019}. 

Like the tree method, the nearest neighbour method also produces each $H$ model which consists of $\psi$ isolating partitions $\theta$, given a subsample of $\psi$ points. Rather than representing each isolating partition as a hyper-rectangle, it is represented as a cell in a Voronoi diagram, where the boundary between two points is the equal distance from these two points.

$H$, being a Voronoi diagram, is built by employing $\psi$ points in $\mathcal D$,
where each isolating partition or Voronoi cell $\theta \in H$ isolates one data point from the rest of the points in $\mathcal D$. The point which determines a cell is regarded as the cell centre.

Given a Voronoi diagram $H$ constructed from a sample $\mathcal{D}$ of $\psi$ points, the Voronoi cell centred at $z \in \mathcal{D}$ is:
\[ \theta[z] = \{x  \in \mathbb{R}^d \ | \  z = \argmin_{\mathsf{z} \in \mathcal{D}} \parallel x - \mathsf{z} \parallel_2 \}. \]

Note that the boundaries of a Voronoi diagram are derived implicitly to be equal distance between any two points in $\mathcal{D}$; and they need not be derived explicitly in realising Isolation Kernel.

\begin{figure}[!t]
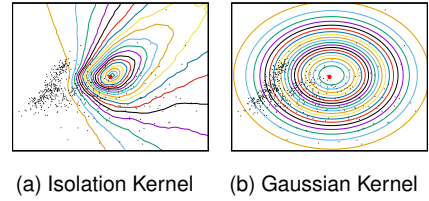

\vspace{-3mm}
 \centering
 \subfloat[Isolation Kernel]{\includecombinedgraphics[height=.12\textwidth,width=.16\textwidth,vecfile=aNNE_forest_psi_10_t_1000_lp_2-eps-converted-to]{aNNE_forest_psi_10_t_1000_lp_2}}
 \subfloat[Gaussian Kernel]{\includecombinedgraphics[height=.12\textwidth,width=.16\textwidth,vecfile=gaussian_forest_sigma_0_2_lp_2-eps-converted-to]{gaussian_forest_sigma_0_2_lp_2}}
 \caption{Contour plots of Isolation Kernel and Gaussian Kernel on a real-world dataset. 
 Black dots are data points. }
\vspace{-2mm}
\label{fig_contour_example}
\vspace{-3mm}
\end{figure}

\subsection{Kernel distributions in contour plots}

Figure \ref{fig_contour_example} compares the contour plots of of Isolation Kernel and Gaussian Kernel. Notice that each contour line of Isolation Kernel, which denotes the same similarity to the centre (red point), is elongated along the sparse region and compressed along the dense region. In contrast, Gaussian kernel (or any data independent kernel) has the same symmetrical contour lines around the centre point, independent of data distribution (as shown in Figure \ref{fig_contour_example}(b)).

\subsection{Feature map of Isolation Kernel}
\label{sec_feature_map}

\begin{definition}
	\label{def:featureMap}
	\textbf{Feature map of Isolation Kernel.}
	For point $x \in \mathbb{R}^d$, the feature mapping $\Phi: x\rightarrow \mathbb \{0,1\}^{t\times \psi}$ of $\kappa_\psi$ is a vector that represents the partitions in all the partitioning $H_i\in \mathds{H}_\psi(D)$, $i=1,\dots,t$; where $x$ falls into only one of the $\psi$ Voronoi cells in each partitioning $H_i$.
\end{definition}

Given $H_i$, $\Phi_i(x)$ is a $\psi$-dimensional binary column vector representing all Voronoi cells $\theta_j \in H_i$, $j=1,\dots,\psi$; where $x$ falls into only one of the $\psi$ Voronoi cells.
The $j$-component of the vector is:
$\Phi_{ij}(x)=\mathds{1}(x\in \theta_j\ |\ \theta_j\in H_i)$. Given $t$ partitionings, $\Phi(x)$ is the concatenation of $\Phi_1(x),\dots,\Phi_t(x)$.

A commonly used kernel such as Gaussian kernel is defined with a function; and its feature map is derived from this function. As a result, the kernel can be computed without a feature map.

Instead, 
Isolation Kernel, having no functional form, is defined based on a finite-dimensional feature map, defined from samples of a given dataset. The samples, as well as the resultant partitions, are products of randomized processes. Once they are determined in the preprocessing step, the feature map defines the Isolation Kernel \emph{exactly} in its subsequent use in an algorithm.

\section{Point-set kernel: Data dependent property}
\label{App_Property_psKC}
The data dependent property of point-set kernel follows directly from the data dependent property of Isolation Kernel. Its  Lemma \cite{IsolationKernel-AAAI2019} is re-stated as follows:
\begin{lem}
\label{lem_IK}
$\forall x, y \in \mathcal{X}_\mathsf{T}$ (dense region) and $\forall x',y' \in \mathcal{X}_\mathsf{S}$ (sparse region) such that $\forall_{z\in \mathcal{X}_\mathsf{S}, z'\in \mathcal{X}_\mathsf{T}} \ \rho(z) > \rho(z')$,
the nearest neighbour-induced Isolation Kernel $\kappa$ has the characteristic that for $\ell_p(x-y)\ =\ \ell_p(x'- y')$ implies
$\kappa( x, y\ |\ D) < \kappa( x', y'\ |\ D)$.
\end{lem}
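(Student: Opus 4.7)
The plan is to exploit the probabilistic characterisation of the nearest-neighbour (aNNE) Isolation Kernel: by Equation~\ref{Eqn_IK}, $\kappa_\psi(x,y\,|\,D)$ is the probability, over an i.i.d.\ sample $\mathcal{D}\subset D$ of size $\psi$, that $x$ and $y$ share a Voronoi cell of the diagram generated by $\mathcal{D}$. Equivalently, $x$ and $y$ lie in the same cell iff they share the same nearest neighbour in $\mathcal{D}$. The proof strategy is to compare this sharing probability between the two regions by identifying a ``separating region'' whose empirical mass under $D$ controls the probability of splitting.

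First I would characterise the complementary event. Fix $r = \ell_p(x-y) = \ell_p(x'-y')$. A point $s \in D$ is called \emph{separating} for $(x,y)$ if, when $s$ is added to $\mathcal{D}$, the common nearest neighbour of $x$ and $y$ changes for exactly one of them. Standard Voronoi geometry implies that only points $s$ whose distance to either $x$ or $y$ is comparable to $r$ can be separating; in particular, any separating point lies in a set $S(x,y)$ whose \emph{geometric} shape and volume depend only on $r$ (and on $p$), not on the local density. One can make this precise by conditioning on the identity of the current nearest neighbour $z^\star \in \mathcal{D}$ of $x$ and observing that $y$'s nearest neighbour differs from $z^\star$ iff some $s \in \mathcal{D}\setminus\{z^\star\}$ lies in the region that is closer to $y$ than $z^\star$ is --- a region whose diameter is at most $O(r)$ and whose shape is translation-equivariant.

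Next I would invoke the density hypothesis. Since $\mathcal{D}$ is drawn uniformly from $D$, the probability that a random sampled point lands in $S(x,y)$ equals $|D \cap S(x,y)|/|D|$, which scales like $\rho \cdot \mathrm{vol}(S(x,y))$. Because $\mathrm{vol}(S(x,y)) = \mathrm{vol}(S(x',y'))$ by the equal-distance hypothesis, and because $\rho$ is strictly larger in $\mathcal{X}_\mathsf{T}$ than in $\mathcal{X}_\mathsf{S}$, the per-sample separating probability is strictly larger in the dense region. Applying this to each of the $\psi-1$ samples beyond $z^\star$, the probability of \emph{no} separation --- i.e., the probability that $x$ and $y$ share a cell --- is strictly smaller in $\mathcal{X}_\mathsf{T}$. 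This yields $\kappa_\psi(x,y\,|\,D) < \kappa_\psi(x',y'\,|\,D)$ as required.

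The main obstacle is that $S(x,y)$ is not a fixed set: whether $s$ is separating depends on the other $\psi-1$ sampled points, so the ``no sample in $S$'' and ``sharing a cell'' events are not literally identical. I expect to handle this either by (i)~conditioning on $z^\star$, the nearest neighbour of the midpoint of $x$ and $y$, and then showing the conditional splitting probability is strictly monotone in the local density of $D$ around the midpoint, or by (ii)~a coupling argument: using a common uniform randomness to jointly draw samples from the dense and sparse regions, and showing that every draw that keeps $x,y$ in the same cell in $\mathcal{X}_\mathsf{T}$ also keeps $x',y'$ together in $\mathcal{X}_\mathsf{S}$, with strict inclusion. Either route reduces the problem to a pointwise density comparison over a fixed-shape geometric region, which is where the hypothesis $\rho(z) > \rho(z')$ does its work.
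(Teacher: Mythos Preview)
The paper does not prove this lemma at all: it is restated verbatim from the cited reference \cite{IsolationKernel-AAAI2019}, and the reader is explicitly sent there for the argument (``see the proof of the Lemma in \cite{IsolationKernel-AAAI2019}''). The only content this paper adds is the one-line gloss that ``the probability of two points of equal inter-point distance falling into the same Voronoi cell is a monotonically decreasing function wrt the density of the cell.'' So there is nothing here to compare your proposal against line by line.

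That said, your sketch is aligned with that gloss, and the obstacle you flag is the real one. The claim that the separating set $S(x,y)$ has shape and volume depending only on $r$ is not literally true: whether a newly drawn sample point splits $x$ from $y$ depends on the location of the current shared nearest neighbour $z^\star$, and the relevant region is the ball $\{s:\ell_p(y-s)<\ell_p(y-z^\star)\}$, whose radius is random and itself density-dependent. Your fix~(i) does not obviously close the gap, because once you condition on $z^\star$ the conditional distribution of the remaining $\psi-1$ points is no longer the same across the two regions, and the radius $\ell_p(y-z^\star)$ is stochastically \emph{smaller} in the dense region, which works in your favour but also means the ``fixed-shape region'' picture is gone. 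Fix~(ii), the coupling, is the more promising route, but you would have to specify a coupling under which a separation event in $\mathcal{X}_\mathsf{S}$ forces the corresponding event in $\mathcal{X}_\mathsf{T}$; a natural candidate is to transport samples by a density-equalising map and check monotonicity of nearest-neighbour assignments, but this needs to be written out. As it stands the proposal is a correct diagnosis of what has to be shown, not yet a proof; for the actual argument you should consult \cite{IsolationKernel-AAAI2019}, since the present paper supplies only the statement.
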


\noindent
Let $\Bar{x}_C$ and $\Bar{x}_{C'}$ be the preimages of $\widehat{\Phi}(C|D)$ and $\widehat{\Phi}(C'|D)$, respectively; and $\rho(C) > \rho(C') \equiv \rho(\Bar{x}_C) > \rho(\Bar{x}_{C'})$. 

\noindent
Further let two reference points $x \in C$ and $x' \in C'$ such that $\ell(x,C) = \ell(x',{C'}) \equiv \ell_p(x-\Bar{x}_C)\ =\ \ell_p(x'- \Bar{x}_{C'})$. We then have reference values: $\kappa(x,\Bar{x}_C) < \kappa(x',\Bar{x}_{C'})$; or for tiny distance $\ell(x,C)$: $\kappa(x,\Bar{x}_C) \approx
\kappa(x',\Bar{x}_{C'})$.

\noindent
From Lemma \ref{lem_IK},  
because $\kappa$ decreases at a faster rate in $\mathcal{X}_\mathsf{T}$ than that in $\mathcal{X}_\mathsf{S}$ as $\Delta x$ increases (i.e., the probability of two points of equal inter-point distance falling into the same Voronoi cell is a monotonically decreasing function wrt the density of the cell, see the proof of the Lemma in \cite{IsolationKernel-AAAI2019}), the following holds:

$\kappa(x+\Delta x, \Bar{x}_C) < \kappa(x'+\Delta x, \Bar{x}_{C'})$
\&\\
$[\kappa(x,\Bar{x}_C) - \kappa(x+\Delta x,\Bar{x}_C)] > [\kappa(x',\Bar{x}_{C'}) - \kappa(x'+\Delta x,\Bar{x}_{C'})]$.

\noindent
Because $\kappa(x,\Bar{x}_C) \equiv \widehat{K}(x,C)$,
\[
\frac{\widehat{K}(x,C) - \widehat{K}(x+\Delta x,C)}{\Delta x} > \frac{\widehat{K}(x',C') - \widehat{K}(x'+\Delta x,C')}{\Delta x}.
\]

\noindent
This gives
$\frac{d \widehat{K}(x,C)}{dx} > \frac{d \widehat{K}(x',C')}{dx}$, if $\ell(x,C) = \ell(x',C')$  and $\rho(C) > \rho(C')$.

\section{kernel k-means versus spectral clustering}
\label{kernel-k-means-other-issues}

The close relationship between kernel k-means and spectral clustering is well established \cite{kernel-k-means-2004, Spectral_clustering-NIPS2003}. 
Kernel k-means and many spectral clustering methods use the same k-means algorithm.
In order to make them scalable for large datasets, kernel k-means usually employs a kernel functional approximation (e.g., Nystr\"{o}m \cite{Nystrom_NIPS2000}) as a preprocessing. An example is scalable kernel k-means \cite{Scalable-kMeans-JMLR19}, and it has a direct comparison with a spectral clustering that employs the same Nystr\"{o}m, utilizing a small set of `landmark' points. 
SGL \cite{StructureGraphLearning-2021} is another example that employs a small set of landmark points to deal with large datasets, though the problem formulation differs. It has close relation to spectral clustering because of the use of the idea of graph embedding.

Spectral clustering can be viewed more broadly as embedding a graph into Euclidean space via an eigenvalue decomposition, before a clustering algorithm is applied. The embedding could be done in several ways; so as the clustering (though typically k-means, a clustering based on Gaussian Mixture model has also been explored, e.g., \cite{SpectralGraphClustering-PNAS5995}.)

Because of the eigenvalue decomposition, spectral clustering is usually more computationally expensive than kernel k-means. Converting a spectral clustering problem into a weighted kernel k-means problem is one way to reduce its time complexity \cite{Kernel-kmeans-spectral-clustering}. A recent connection between spectral clustering and kernel k-means is made via regularisation \cite{KernelCut-2019}.

\section{Experimental settings \& additional results}
\label{App_Addtional_result}
This section provides the details of the datasets/images used, the experimental settings and additional results.  

\vspace{3mm}
\noindent
\textbf{Datasets used}

\begin{enumerate}
    \item \href{http://www.cs.uef.fi/sipu/datasets/}{\underline{Artificial benchmark datasets}}: all benchmark datasets are from \cite{ClusteringDatasets}, except the Ring-G dataset which is our creation, and the AC dataset was first used in \cite{KNNKernel}.
      \item \href{https://www.paintinghere.com/painting/vincent_van_gogh_starry_night_over_the_rhone_2_4713.html}{ \underline{Starry Night over the Rhone 2}} (932 x 687)
    \item \href{www.comuseum.com}{\underline{Zhao Mengfu’s Autumn Colors}} (2,005 x 500)
    \item \href{https://www.csie.ntu.edu.tw/~cjlin/libsvmtools/datasets/multiclass.html}{\underline{MNIST data sets}}: images of handwritten digits, each is represented with 784 dimensions: 
    
    MNIST8M has 8.1 million images \cite{loosli-canu-bottou-2006}. 
    
    MNIST70k is the source dataset (having 70,000 images) from which the MNIST8M dataset is generated.

\end{enumerate}

\noindent
\textbf{Experimental settings}.
We search parameters in each of the algorithms, i.e., DP, scalable kernel k-means, kernel k-means and \texttt{psKC}; and report their best clustering outcomes after the search. 

We implemented \texttt{psKC} in C++. Scalable kernel k-means is implemented in Scala as part of the Spark framework \cite{Scalable-kMeans-JMLR19}; DBSCAN is implemented in Java as part of the WEKA framework \cite{Weka-reference}; RCC is implemented in Python \cite{shah2017robust}; and DP, DP$_{ik}$, k-means and kNN kernel are implemented in Matlab \cite{KNNKernel,DensityPeak-2014}.

The parameter search ranges used in the experiments are:

\begin{itemize}[leftmargin=4mm]
    \item DP:
$\epsilon$ (the bandwidth used for density estimation) is in [0.001m, 0.002m,..., 0.4m] where $m$ is the maximum pairwise distance.
\item kernel k-means \cite{KNNKernel}: $k$ in kNN kernel is in [0.01n, 0.02n,..., 0.99n]; and the number of dimensions used is 100. 
\item Scalable kernel k-means \cite{Scalable-kMeans-JMLR19}: $\sigma$ in [0.1, 0.25, 0.5, 1,..., 16, 24, 32]; 
$s=100$ (target \#dimensions of the PCA step) and $c=400$ (\#dimensions of Nystr\"{o}m's output), except for data set less than 400 points then it is $s=20$ and $c=200$. 
\item \texttt{psKC}: $\psi$ in [2, 4, 6, 8, 16, 24, 32], $\tau=0.1$ and $\varrho=0.1$ for the images, except MNIST70k: $\psi=5000$, $\tau=0.001$.
For artificial datasets, $\psi$ in [55, 70, 128, 256, 512, 1024], $\tau$ in [0.1, 0.2, 0.5, 1, 2, 5, 10, 20, 50, 100, 800]$\times10^{-4}$, $\varrho=0.1$ (except in Ring-G, where $\varrho=0.26$ was used.)

\item \texttt{psKC$_{g}$}: $\gamma = 2^i$ where $i$ in [1, 2, 3,..., 16], $\tau=0.1$ and $\varrho$ in [0.1, 0.01, 0.001,.., $1 \times 10^{-10}$].
\item DBSCAN: $\epsilon$ in [0.001m, 0.002m,..., 0.999m] and $MinPts$ in [2, 3,..., 30], where $m$ is the maximum pairwise distance and $MinPts$ is the density threshold.
\item MBSCAN: $\psi=32$, $t=200$, $\epsilon$ in [0.001m, 0.002m,..., 0.999m] and $MinPts$ in [2, 3,..., 30], where $m$ is the maximum pairwise distance and $MinPts$ is the density threshold. 
\item DP$_{ik}$: For DP, $\epsilon$ is in [0.001m, 0.002m,..., 0.4m] where $m$ is the maximum pairwise distance. 
Setting for Isolation Kernel: $\psi$ in [2, 4, 6, 8, 16, 24, 32].
\item RCC: $k$ is in [1, 2,..., 200] and $\tau$ is in [1, 1.1,..., 20].
\item SGL \cite{StructureGraphLearning-2021}: $\#anchor$ in [25, 30, 33, 36, 40, 50], $\alpha$ in [0.1, 1, 10, 50] and $\beta$ in [0.0001, 0.001]
\end{itemize}

All algorithms were given the advantage by setting the number of clusters to the true cluster number, except DBSCAN, MBSCAN, RCC and \texttt{psKC} which find the number of clusters automatically through their parameter settings.
$t=100$ is the default setting for Isolation Kernel for all experiments, except those in Section~\ref{sec_stability}.

The experiments were run on a Linux CPU machine: AMD 16-core CPU with each core running at 2.0 GHz and 32 GB RAM. The feature space conversion was executed on a machine having GPU: 2 x GTX 1080 Ti with each card having 12 GB RAM.

\vspace{2mm}
\noindent
\textbf{Guide for \texttt{psKC} parameter tuning}: In addition to $\varrho=0.1$ which can usually be set as default, the following two rules may be used to tune \texttt{psKC}:

1. If clusters are joined but need to be split, then increase $\psi$.

2. If clusters are split but need to be joined, then decrease $\tau$.

\vspace{2mm}
\noindent
\textbf{Parameter settings used by }\texttt{psKC} in to produce the clustering outcomes shown in Tables \ref{tab:compare} to \ref{fig_mnist70k} are listed in  Table \ref{tab:psKC_param}.

\vspace{2mm}
\noindent
\textbf{Source code} of \texttt{psKC} is available at\\ \url{https://github.com/IsolationKernel/Codes/tree/main/IDK}

\vspace{2mm}
\noindent
\textbf{Additional results}:
Table \ref{tab:compare_2nd} shows the clustering outcomes of other clustering algorithms, in addition to those used in the main text. The additional algorithms are DBSCAN \cite{DBSCAN-1996}, MBSCAN \cite{IsolationKernel-AAAI2019}, RCC\footnote{Robust continuous clustering \cite{shah2017robust} optimises a continuous and differentiable objective based on the duality between robust estimation and line processes \cite{black1996unification}.
Its objective is applied on the connectivity of a mutual $k$-nearest neighbours  (m-$k$NN) graph \cite{brito1997connectivity} and is optimised using a linear least-squares solver. RCC has the ability to detect arbitrarily shaped clusters based on the  m-$k$NN graph, but its computational complexity is dominated by the m-$k$NN graph construction which costs $n^2$
\cite{dong2011efficient}. Thus, although the solver used in RCC scales to large datasets, RCC is still unable to deal with large datasets.} \cite{shah2017robust}, k-means \cite{k-means-macqueen1967}, SGL \cite{StructureGraphLearning-2021}, DP$_{ik}$ and \texttt{psKC}$_g$. Both MBSCAN and DP$_{ik}$ employ Isolation Kernel.
\texttt{psKC}$_g$ employs the following point-set kernel, where $\kappa_g$ is the Gaussian kernel:
\[
\widehat{K}_g(x,G) = \frac{1}{|G|}\sum_{y \in G} \kappa_g(x,y)
\]

The result in Table \ref{tab:compare_2nd} shows that MBSCAN which employs IK has the best clustering outcomes. It is the only one among these algorithms that correctly cluster four out of the five datasets. But, MBSCAN is still worse than \texttt{psKC} in terms of clustering outcomes and time complexity.

\begin{table*}
 \caption{The \texttt{psKC} parameters used to generate the clustering outcomes in Tables 2, 3, 4 \& 5. The random seed was set to 42, $t=100$ and $\sigma = 0.1$ for all experiments, except $\sigma = 0.26$ for Ring-G.}
 \label{tab:psKC_param}
 \centering
 \begin{tabular}{clrrc|clrr}
  \toprule
Table & Dataset & \multicolumn{1}{c}{$\psi$} & \multicolumn{1}{c}{$\tau$} & & Table & Dataset & \multicolumn{1}{c}{$\psi$} & \multicolumn{1}{c}{$\tau$} \\
  \midrule
\multirow{5}{*}{2} & Ring-G & 128 & $2 \times 10^{-3}$ & & \multirow{2}{*}{3} & Autumn Colors & 8 & \multirow{2}{*}{0.1} \\
 & AC & 256 & $2 \times 10^{-4}$ & &  & Red stamps (Class 7) & 24 \\ \cmidrule{6-9}
 & Aggreg & 128 & $1 \times 10^{-2}$ & & 4 & Starry Night & 16 & 0.1\\ \cmidrule{6-9}
 & Spiral & 512 & $1 \times 10^{-4}$ & & 5 & MNIST70k & 5,000 & $1 \times 10^{-3}$ \\
 & S3 & 70 & $8 \times 10^{-2}$ & \\
  \bottomrule
 \end{tabular}
\end{table*}

\setlength{\fboxrule}{2pt}
\begin{table*}[h]
 \caption{Artificial datasets: Clustering outcomes of k-means, DBSCAN, \texttt{psKC$_{g}$} which employs Gaussian Kernel, MBSCAN \& DP$_{ik}$ which employ Isolation Kernel and RCC. 
 The results with yellow frames indicate good clustering outcomes; and those without have poor clustering outcomes.} \label{tab:compare_2nd}
 \centering
 \setlength{\tabcolsep}{0pt}
\tcbset{width=2.8cm,boxsep=0mm,left=-2mm,right=0mm,top=0mm,bottom=0mm,colframe=yellow!75!white,colback=black!0!white}
 \begin{tabular}{@{}c@{}c@{}c@{}c@{}c@{}c@{}c@{}}
  \toprule
 & k-means & \texttt{psKC$_{g}$} & DBSCAN & MBSCAN & DP$_{ik}$ & RCC \\ 
  \midrule

\begin{turn}{90}  \hspace{1.cm}Ring-G   
\end{turn} &  \includecombinedgraphics[scale=0.38,vecfile=2U_ring_2G_kmeans-eps-converted-to]{2U_ring_2G_kmeans} & \includecombinedgraphics[scale=0.38,vecfile=2U_ring_2G_gaus-eps-converted-to]{2U_ring_2G_gaus} & \includecombinedgraphics[scale=0.38,vecfile=2U_ring_2G_dbscan-eps-converted-to]{2U_ring_2G_dbscan} & \begin{tcolorbox}\includecombinedgraphics[scale=0.38,vecfile=2U_ring_2G_MBSCAN-eps-converted-to]{2U_ring_2G_MBSCAN} \end{tcolorbox} & \includecombinedgraphics[scale=0.38,vecfile=2U_ring_2G_DPik-eps-converted-to]{2U_ring_2G_DPik} & 
\includecombinedgraphics[scale=0.38,vecfile=rcc_ring_g-eps-converted-to]{rcc_ring_g} \\
\begin{turn}{90}  \hspace{1.5cm}AC \end{turn}
& \includecombinedgraphics[scale=0.38,vecfile=k_means_AC_mkVI-eps-converted-to]{k_means_AC_mkVI} & \begin{tcolorbox}\includecombinedgraphics[scale=0.38,vecfile=psKC_gaus_2C_new_mkVI-eps-converted-to]{psKC_gaus_2C_new_mkVI} \end{tcolorbox} & \begin{tcolorbox}\includecombinedgraphics[scale=0.38,vecfile=dbscan_AC_mkVI-eps-converted-to]{dbscan_AC_mkVI}\end{tcolorbox} &
\begin{tcolorbox}\includecombinedgraphics[scale=0.38,vecfile=dbscan_AC_mkVI-eps-converted-to]{dbscan_AC_mkVI} \end{tcolorbox} & \begin{tcolorbox}\includecombinedgraphics[scale=0.38,vecfile=dp_ik_AC_mkVI-eps-converted-to]{dp_ik_AC_mkVI} \end{tcolorbox} & \begin{tcolorbox}\includecombinedgraphics[scale=0.38,vecfile=dbscan_AC_mkVI-eps-converted-to]{dbscan_AC_mkVI} \end{tcolorbox}\\
\begin{turn}{90}  \hspace{1.cm}Aggreg. \end{turn}
& \includecombinedgraphics[scale=0.38,vecfile=k_means_aggregation-eps-converted-to]{k_means_aggregation}  & \includecombinedgraphics[scale=0.38,vecfile=psKC_gaus_aggregation-eps-converted-to]{psKC_gaus_aggregation} & \begin{tcolorbox}\includecombinedgraphics[scale=0.38,vecfile=dbscan_aggregation-eps-converted-to]{dbscan_aggregation} \end{tcolorbox} &
\begin{tcolorbox}\includecombinedgraphics[scale=0.38,vecfile=dbscan_aggregation-eps-converted-to]{dbscan_aggregation} \end{tcolorbox} &
\begin{tcolorbox}\includecombinedgraphics[scale=0.38,vecfile=dp_ik_aggregation-eps-converted-to]{dp_ik_aggregation} \end{tcolorbox} & \begin{tcolorbox}\includecombinedgraphics[scale=0.38,vecfile=dbscan_aggregation-eps-converted-to]{dbscan_aggregation} \end{tcolorbox} \\
\begin{turn}{90}  \hspace{1.cm}Spiral \end{turn}
& \includecombinedgraphics[scale=0.38,vecfile=k_means_spiral-eps-converted-to]{k_means_spiral}  & \begin{tcolorbox}\includecombinedgraphics[scale=0.38,vecfile=psKC_gaus_spiral-eps-converted-to]{psKC_gaus_spiral} \end{tcolorbox}  & \begin{tcolorbox}\includecombinedgraphics[scale=0.38,vecfile=dbscan_spiral-eps-converted-to]{dbscan_spiral} \end{tcolorbox} &
\begin{tcolorbox}\includecombinedgraphics[scale=0.38,vecfile=dbscan_spiral-eps-converted-to]{dbscan_spiral} \end{tcolorbox} &
\begin{tcolorbox}\includecombinedgraphics[scale=0.38,vecfile=dp_ik_spiral-eps-converted-to]{dp_ik_spiral} \end{tcolorbox} & \begin{tcolorbox}\includecombinedgraphics[scale=0.38,vecfile=dbscan_spiral-eps-converted-to]{dbscan_spiral} \end{tcolorbox}\\
\begin{turn}{90}  \hspace{1.cm}S3 \end{turn}
& \includecombinedgraphics[scale=0.38,vecfile=k_means_s3-eps-converted-to]{k_means_s3} & \includecombinedgraphics[scale=0.38,vecfile=psKC_gaus_s3_15_cls-eps-converted-to]{psKC_gaus_s3_15_cls} & \includecombinedgraphics[scale=0.38,vecfile=dbscan_s3-eps-converted-to]{dbscan_s3} & \includecombinedgraphics[scale=0.38,vecfile=mbscan_s3-eps-converted-to]{MBSCAN_s3} & \includecombinedgraphics[scale=0.38,vecfile=dp_ik_s3-eps-converted-to]{dp_ik_s3} & \includecombinedgraphics[scale=0.38,vecfile=rcc_s3-eps-converted-to]{rcc_s3}\\
\bottomrule
 \end{tabular}

\end{table*}

\end{document}